\newtheorem{theorem}{Theorem}
\newtheorem{definition}{Definition}
\newtheorem{lemma}{Lemma}
\newtheorem{remark}{Remark}
\newtheorem{assumption}{Assumption}
\newtheorem{example}{Example}
\begin{document}
	
	\title{\LARGE \bf
		Efficient Planar Pose Estimation via UWB Measurements}
	\author{Haodong Jiang$^{1,2}$, Wentao Wang$^{3}$, Yuan Shen$^{4}$, Xinghan Li$^{3}$, Xiaoqiang Ren$^{5}$, Biqiang Mu$^{6}$, and Junfeng Wu$^{2,1}$
		\thanks{$^1$: Shenzhen Institute of Artificial Intelligence and Robotics for Society~(AIRS), {\tt\small haodongjiang@link.cuhk.edu.cn}. $^2$: School of Data Science, the Chinese University of HongKong, Shenzhen, P. R. China, {\tt\small junfengwu@cuhk.edu.cn}. $^3$: College of Control Science and Engineering and the State Key Laboratory of Industrial Control Technology, Zhejiang University, Hangzhou, P. R. China, {\tt\small \{wwt1999,11832014\}}@zju.edu.cn. $^4$: School of Electronic and Optical Engineering, Nanjing University of Science and Technology, Nanjing, P. R. China, {\tt\small syuan@njust.edu.cn}. $^5$: School of Mechatronic Engineering and Automation, Shanghai University, Shanghai, P. R. China {\tt\small xqren@shu.edu.cn}. $^6$: Key Laboratory of Systems and Control, Institute of Systems Science, Academy of Mathematics and Systems Science, Chinese Academy of Sciences, Beijing, P. R. China, {\tt\small bqmu@amss.ac.cn}}%
	}
	\maketitle
	\thispagestyle{empty}
	\pagestyle{empty}

	\begin{abstract}
		State estimation is an essential part of autonomous systems. Integrating the Ultra-Wideband~(UWB) technique has been shown to correct the long-term estimation drift and bypass the complexity of loop closure detection. However, few works on robotics treat UWB as a stand-alone state estimation solution. The primary purpose of this work is to investigate planar pose estimation using only UWB range measurements. We prove the excellent property of a two-step scheme, which says we can refine a consistent estimator to be asymptotically efficient by one step of Gauss-Newton iteration. Grounded on this result, we design the \textbf{GN-ULS} estimator, which reduces the computation time significantly compared to previous methods and presents the possibility of using only UWB for real-time state estimation.
	\end{abstract}
	\section{INTRODUCTION}
	\subsection{Background}
	State estimation is a fundamental prerequisite for an intelligent mobile robot to realize tasks such as obstacle avoidance and path planning. In recent years, significant efforts have been devoted to achieving high-performance and real-time state estimation using onboard sensors such as IMU, cameras, and lidars. However, these methods confront issues such as long-term drift~\cite{zhang2017low} and low robustness in geometrically degenerated environments~\cite{zhen2019estimating}. To overcome the above-mentioned challenges, we can integrate external information such as GPS in state estimation~\cite{shan2020lio}.
	
	Ultra-Wideband~(UWB) is a radio technology that is robust to multi-path effect and can provide precise TOA or TDOA measurements~\cite{schmid2019accuracy}. UWB is traditionally used for localization~\cite{gonzalez2009mobile, fang2020graph,zeng2022localizability,xue2021improving,zeng2023consistent}, while many recent works~\cite{song2019uwb, nguyen2021viral, wang2017ultra, cao2021vir} integrate UWB to realize drift-free state estimation in GPS-denied environments, few works on robotics investigate using UWB independently for pose estimation.
	
	This work considers estimating a robot's pose via only UWB range measurements obtained with a symmetric two-way TOA measuring technique. We focus on the planar case as shown in Fig.~\ref{fig::cover_figure}, which has many critical applications such as search and rescue robots~\cite{Queralta_Rescue_robots} and indoor service robots~\cite{Chung_indoor_service_robots}. We recognize it as the \textbf{Rigid Body Localization~(RBL)} problem in the signal processing community. 
	
	Through the literature review, we find that the pose estimators' statistical efficiency has not been well studied, and the estimators' computational complexity requires further reduction to realize real-time estimation. In this work, we adopt a two-step scheme and develop a closed-form estimator, which is asymptotically efficient under mild conditions related to anchor geometry, and significantly reduces computation time. We also conduct simulations and experiments to demonstrate our method's statistical and computational efficiency.

	\begin{figure}[t]
		\centering
		\includegraphics[width=0.45\textwidth]{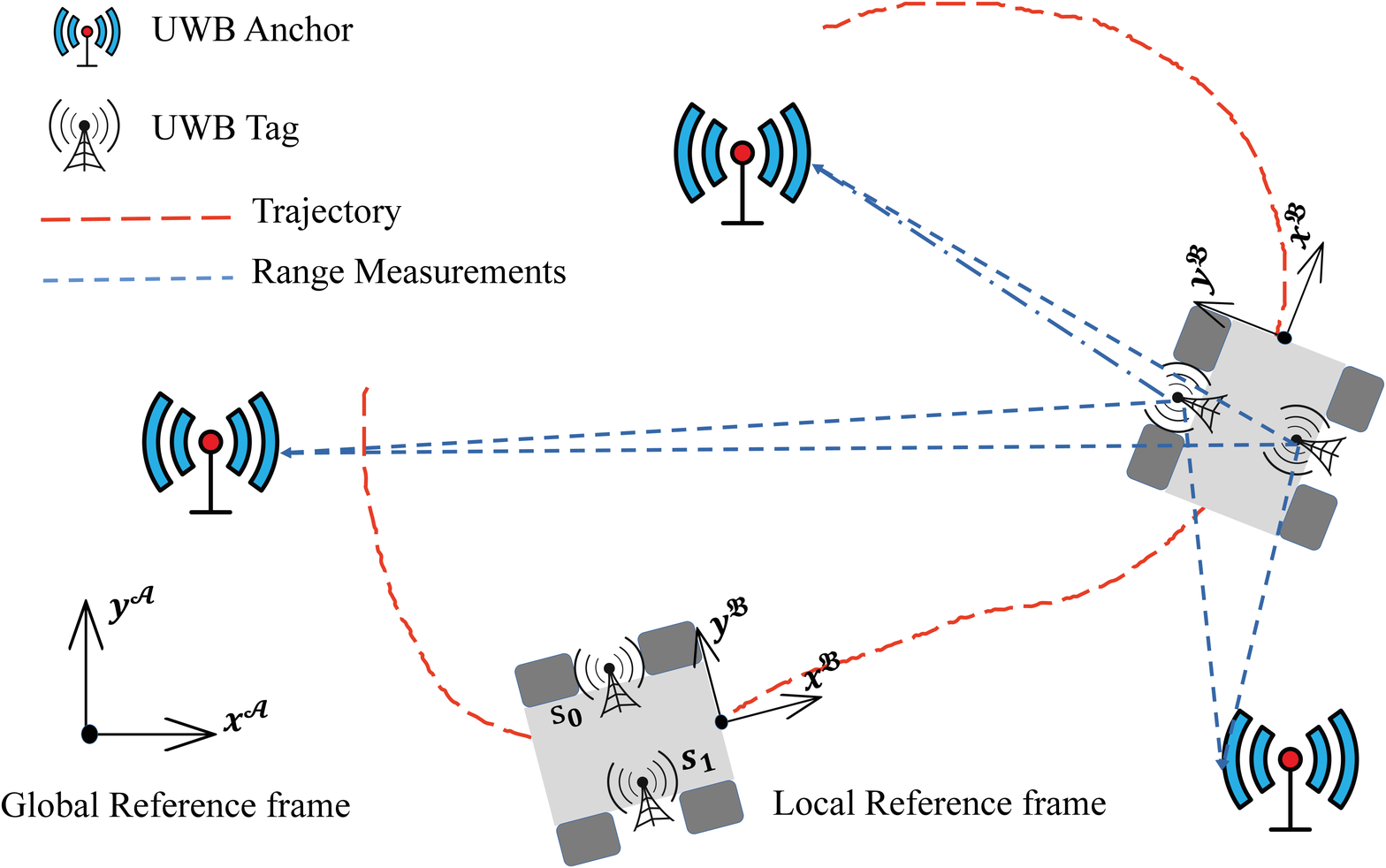}
		\caption{Planar Pose Estimation via UWB Measurements}
		\label{fig::cover_figure}
		\vspace{-20pt}
	\end{figure}
	
	\subsection{Related Work on Planar Rigid Body Localization}
	The maximum likelihood~(\textbf{ML}) formulation of RBL under i.i.d Gaussian noise assumption is a constrained weighted least squares~(LS) problem~\eqref{ML}. However, the ML estimate is difficult to obtain due to the nonconvexity and nonlinearity of~\eqref{ML}. A common practice is to apply the least squares methodology to the squared range measurements and formulate the squared least squares~(\textbf{SLS}) problem~\eqref{SLS}. 
	
	As far as we know, the work~\cite{chepuri2014rigid} is the first to formulate the RBL problem, which proposes to modify the \textbf{SLS} problem by projecting the squared measurements onto the null space of unit vectors. This operation eliminates the quadratic term and makes the problem linear, and the idea is followed by the work \cite{chen2015accurate} and our work. We term the resulting formulation~\eqref{PSLS} projected squared least squares~(\textbf{PSLS}). Based on \textbf{PSLS}, the work~\cite{chepuri2014rigid} solves a weighted orthogonal Procrustes problem by Gauss-Newton algorithms and obtains the initial value from simpler problems with closed-form solutions. The work~\cite{chepuri2014rigid} also derives the unitarily constrained Cramer-Rao lower bound~(CRLB). The work \cite{chen2015accurate} harnesses the structure of the rotation matrix and formulates a generalized trust region subproblem~(GTRS), and the solution is refined on the linearized \textbf{SLS} problem. The work~\cite{jiang2019sensor} uses semidefinite relaxation and formulate \textbf{SLS} as a semidefinite program~(SDP). The solution is refined by one step of Gauss-Newton iteration on the \textbf{ML} problem. 
	
	To sum up, the ML estimate for the planar RBL problem is difficult to obtain. Previous works turn to the \textbf{SLS} problem and use different techniques to make the problem linear. However, earlier studies do not rigorously evaluate the proposed estimators' deviation from the ML estimator or their statistical efficiency. Considering the growing interest in provably optimal state inference~\cite{rosen2021advances}, we believe these topics deserve careful study. The theoretical results also motivate the design of a faster optimal estimator.
	\subsection{Contributions}
	We summarize our contributions as follows:
	\begin{itemize}
		\item[(i)] We design a closed-form planar pose estimator using only UWB range measurements, which has $O(n)$ computational complexity and provably converges to the ML estimator as the measurement number $n$ increases. Given the high sampling rate of practical UWB systems~\cite{grossiwindhager2019snaploc}, our work can play a valuable role in applications.
		\item[(ii)] We propose mild conditions related to anchor geometry, under which the ML estimator, therefore our method is asymptotically efficient, i.e., the estimate can converge to the true pose with minimum variance.
		\item[(iii)] We conduct experiments in an indoor environment and elaborate on the data preprocessing procedure. The dataset and code are available on our website\footnote{https://github.com/SLAMLab-CUHKSZ/Efficient-Pose-Estimation-via-UWB-measurements}.
	\end{itemize}
	
	We organize the paper as follows. Section~\ref{section::Planar Rigid Body} gives formulations of the planar RBL problem. Section~\ref{Section:: Consistent Estimator} develops the \textbf{GN-ULS} estimator in two steps. Section~\ref{section::Asymptotic ML} gives conditions under which the ML estimator, therefore the \textbf{GN-ULS} estimator, is asymptotically efficient.  Section~\ref{section:: simulations} compare different estimators via various simulations. Section~\ref{section:: simulations} introduces data collection and discusses experiments on static and dynamic datasets. Section~\ref{section::conclusion} concludes the paper and discusses future works.

	\textbf{Notations:} All vectors are column vectors and denoted by bold, lower case letters; matrices are denoted by bold, upper case letters; reference frames are denoted with $\mathcal {A}, \mathcal {B}, \mathcal {C}$. ${\rm vec}({\bf A})$ is a column vector obtained by stacking the columns of matrix ${\bf A}$. ${\rm Null}({\bf A})$ is the null space of ${\bf A}$. We use ${\bf I,1,0}$ to denote identity matrices, unit vectors or matrices, and zero vectors or matrices. The symbol $\odot$ represents the Hadamard product and $\otimes$ the Kronecker product. For a list of vectors, we use the tuple notation $(\bf{v_1},\bf{v_2}\dots,\bf{v_n})$ for 
	$[\bf{v_1^{\top},v_2^{\top}\dots,v_n^{\top}}]^{\top}$. We use the notation $\textbf{a}^o$ for the true value of an unknown variable $\textbf{a}$. We use the notation  $O_p(\mathbf{1})$ for vectors or matrices whose entries are $O_p(1)$, i.e., stochastically bounded. We use the notation $o_p(\mathbf{1})$ for vectors or matrices whose entries are $o_p(1)$, i.e., convergent in probability towards zero.
	
		\section{Planar Rigid Body Localization}\label{section::Planar Rigid Body}
	\subsection{Range Measurement Model}
	Let $\mathcal {A}$ be the global frame, and let $\mathcal{B}$ be the local frame of the rigid body. The z-axes of $\mathcal{A}$ and $\mathcal{B}$ are aligned.
	
	Let $\textbf{a}_m^{\mathcal{A}}\in\mathbb{R}^2, m\in\{1,\ldots ,M\}$ be the planar coordinates of $M$ anchors in $\mathcal{A}$. Let $\textbf{s}_i^{\mathcal {B}}\in\mathbb{R}^2, i\in\{1,\ldots ,N\}$ be the planar coordinates in $\mathcal{B}$ of $N$ tags fixed on the rigid boy. The planar pose of $\mathcal{B}$ with respect to $\mathcal{A}$ can be represented by a rotation matrix ${\mathbf{R}}^o\in{\rm SO}(2)$ and a position vector ${\mathbf{t}^o}\in\mathbb{R}^2$. Each anchor can range with each tag repeatedly. For the brevity of formulation, we consider repeatedly ranging for $T$ times equivalent to deploying $T$ anchors on the same site. In total, we have $M_{T}\triangleq{MT}$ anchors and $n=NM_{T}$ measurements. 
	
	We denote by $d_{im}$ the distance measurement betweem the $m$-th anchor and the $i$-th tag, denote by $\Delta h_{im}$ the height difference between the $m$-th anchor and the $i$-th tag. The distance measurement model is as follows:
	\begin{equation}\label{measurement model}
		d_{im}=\sqrt{\|\textbf{a}_m^{\mathcal {A}}-\textbf{R}^o\textbf{s}_i^{\mathcal {B}}-\textbf{t}^o\|^2+(\Delta h_{im})^2}+r_{im}
	\end{equation}
	where $r_{im}$ is the additive measurement noise.  
	
	The RBL problem is formulated as follows: given $\textbf{a}_m^{\mathcal{A}}$, $\textbf{s}_i^{\mathcal {B}}$, $\Delta h_{im}$ and the distance measurements $d_{im}$, estimate the pose of the rigid body, i.e., the rotation $\textbf{R}^o$ and the position $\textbf{t}^o$.
	
	\subsection{Assumptions}
	We make the following assumptions:
	
	\begin{assumption}\label{assumption::same height}
		Anchors and tags are on the same height.
	\end{assumption}
	\begin{remark}
		We assume $\Delta h_{im}=0$ for simplicity, modification for $\Delta h_{im}\neq 0$ is detailed in Appendix~\ref{Appendix::Extension}.
	\end{remark}
	
	\begin{assumption}\label{assumption::noise}
		$r_{im}$'s are i.i.d Gaussian noises with zero mean, finite and known standard deviation $\sigma_{im}>0$.
	\end{assumption}
	\begin{assumption}\label{assumption::anchor distribution}
		The sample distribution $F_m$ of the sequence  $\bf{a_1^{\mathcal{A}}},\bf{a_2^{\mathcal{A}}},\dots$ converges to some distribution $F_\mu$.
	\end{assumption}
	\begin{example} \label{example_random_vector}
		Suppose ${\bf a}_{m}^{\mathcal{A}}$'s are independent samples from distribution function $F_{\mu}$, $F_m$ converges to $F_{\mu}$ as $M$ increases.
	\end{example}
	\begin{example} \label{example_fix_anchors}
		As the number $T$ of repeated ranging increases, $F_m$ converges to $F_{\mu}$. Denote the measure induced by $F_{\mu}$ as $\mu$, we have $\mu({\bf a}_{m}^{\mathcal{A}})=\frac{1}{M}$ for $m=1,\dots,M$.
	\end{example}
	\begin{assumption}\label{assumption::unique localizability}
		At least three non-colinear anchors exist, and at least two tags non-colinear with the origin $O^{\mathcal{B}}$ of the local reference frame.
	\end{assumption}
	\begin{remark}
		The necessary and sufficient condition for the planar pose to be observable requires at least three non-colinear anchors and at least two tags. A simple proof is given in Appendix~\ref{Appendix:: observability}. Under specific constraints, the required sensors can be further reduced, such as in work~\cite{lee2021ultrawideband}. We make a slightly stronger assumption in this paper to ease the proof of Lemma~\ref{lamma:rank} in Appendix~\ref{Appendix::lemma1}.
	\end{remark}
	\begin{assumption} \label{assumption::deployment_of_anchor}
		Consider the limit measure $\mu$ in Assumption \ref{assumption::anchor distribution}, there does not exist any line $\mathcal L$ such that $\mu(\mathcal L)=1$.
	\end{assumption}
	
	\subsection{Maximum Likelihood Formulation}
	The ML estimates $\hat{{\bf R}}_{\rm ML},\hat{{\bf t}}_{\rm ML}$ of the planar RBL problem are the solution to the following optimization problem:
	\begin{subequations}\label{ML}
		\begin{align}
			\hbox{\textbf{(ML)}}~\mathop{\rm min~}\limits_{\textbf{R,t}} ~&\sum_{i=1}^{N}\sum_{m=1}^{M_T}\frac{(d_{im}-\|{\bf a}_{m}^{\mathcal{A}}-
				{\bf Rs}_i^{\mathcal{B}}-{\bf t}\|)^2}{\sigma_{im}^2} \\
			\mathop{\rm s.t.} ~~& \textbf{R}\in {\rm SO}(2), ~\textbf{t} \in \mathbb{R}^2,
		\end{align}
	\end{subequations}
	\subsection{Squared Least Squares Formulation}
	The \textbf{ML} formulation is non-linear, non-convex, and difficult to solve. We square both sides of~\eqref{measurement model} and have:
	\begin{subequations}\label{squared model}
		\begin{align}
			d_{im}^2 &= (\|\textbf{a}_m^{\mathcal {A}}-\textbf{Rs}_i^{\mathcal {B}}-\textbf{t}\|+r_{im})^2 \label{smodel.1}\\
			&=\|\textbf{a}_m^{\mathcal {A}}\|^2-2{\textbf{a}_m^{\mathcal {A}}}^{\top}(\textbf{Rs}_i^{\mathcal {B}}+\textbf{t})+\|\textbf{Rs}_i^{\mathcal {B}}+\textbf{t}\|^2\nonumber\\&+2\|\textbf{a}_m^{\mathcal {A}}-\textbf{Rs}_i^{\mathcal {B}}-\textbf{t}\|r_{im}+r_{im}^2\label{smodel.3}
		\end{align}
	\end{subequations}
	
	We subtract $\sigma_{im}^2$ from both sides of \eqref{smodel.3} and have:
	\begin{equation}\label{smodel.4}
		d_{im}^2-\sigma_{im}^2=\|\textbf{a}_m^{\mathcal {A}}\|^2-2{\textbf{a}_m^{\mathcal {A}}}^{\top}(\textbf{Rs}_i^{\mathcal {B}}+\textbf{t})+\|\textbf{Rs}_i^{\mathcal {B}}+\textbf{t}\|^2+e_{im},
	\end{equation}
	where $e_{im}=2\|\textbf{a}_m^{\mathcal {A}}-\textbf{Rs}_i^{\mathcal {B}}-\textbf{t}\|r_{im}+(r_{im}^2-\sigma_{im}^2)$ has zero mean. 
	
	Stacking \eqref{smodel.4} over all measurements for $i$-th tag gives
	\begin{equation}\label{vector form}
		{\bf d}_i=-2{\bf A}^{\top}({\bf Rs}_i^{\mathcal {B}}+{\bf t})+\|{\bf Rs}_i^{\mathcal {B}}+\textbf{t}\|^2 {\bf 1}_{M_{T}}+\textbf{e}_i
	\end{equation}
	where
	$$\textbf{A}=\left[ \textbf{a}_1^{\mathcal {A}} ~ \textbf{a}_2^{\mathcal {A}}  \cdots  \textbf{a}_{M_T}^{\mathcal {A}} \right]\in \mathbb{R}^{2\times M_{T}}~~\textbf{S}=\left[
	\textbf{s}_1^{\mathcal {B}}~  \textbf{s}_2^{\mathcal {B}} \cdots  \textbf{s}_N^{\mathcal {B}} \\
	\right]\in \mathbb{R}^{2\times N}
	$$
	$$ \bf{d}_i=\begin{bmatrix}
		d_{i1}^2-\|\textbf{a}_1^{\mathcal {A}}\|^2-\sigma_{i1}^2 \\
		\vdots \\
		d_{iM_{T}}^2-\|\textbf{a}_{M_{T}}^{\mathcal {A}}\|^2-\sigma_{iM_{T}}^2\\
	\end{bmatrix}
	~~~{\rm and}~~~\bf{e}_i=\begin{bmatrix}
		e_{i1} \\
		\vdots \\
		e_{iM_{T}}\\
	\end{bmatrix}.
	$$
	The squared least squares problem is formulated as follows:
	\begin{subequations}\label{SLS}
		\vspace{-10pt}
		\begin{align}
			\hbox{\textbf{(SLS)}}~\mathop{\rm min~}\limits_{\textbf{R,t}} ~&\sum_{i=1}^{N}\|{\bf d}_i+2{\bf A}^{\top}({\bf Rs}_i^{\mathcal {B}}+{\bf t})-\|{\bf Rs}_i^{\mathcal {B}}+{\bf t}\|^2 {\bf 1}_{M_T}\|^2_{ \Sigma_{{\bf e}_i}} \\
			\mathop{\rm s.t.} ~~& {\bf R}\in {\rm SO}(2), ~{\bf t} \in \mathbb{R}^2,
		\end{align}
	\end{subequations}
	where $\|\cdot\|^2_\Sigma\triangleq{(\cdot)^{\top}\Sigma^{-1}(\cdot)}$, and ${\Sigma_{{\bf e}_i}}$ is the covariance of ${\bf e}_i$.
	
	Previous works differ in the way they deal with the quadratic term $\|\textbf{Rs}_i^{\mathcal {B}}+\textbf{t}\|^2$. The works \cite{chepuri2014rigid,chen2015accurate} multiply both sides of~\eqref{vector form} by projection matrix or orthonormal basis of ${\rm Null}({\bf 1})$ and formulate a linear LS problem, while \cite{jiang2019sensor} expands the quadratic term and formulate an SDP problem.

	\section{An Efficient Planar Pose Estimator}\label{Section:: Consistent Estimator}
	We design the planar pose estimator in two steps, grounded on the following theorem:
	\begin{theorem}\label{theorem: little op}
		Suppose that $\hat{\bf R}$ and 
		$\hat {\bf t}$ are $\sqrt{n}$-consistent estimates of
		$\bf R^o$ and $\bf t^o$. The estimates $\hat{{\bf R}}_{\rm GN},\hat{{\bf t}}_{\rm GN}$ obtained by one step of Gauss-Newton iteration on the ML problem~\eqref{ML} converge in probability to the ML estimates as measurement number $n$ increases, and
		$$
		\hat{{\bf R}}_{\rm GN}-
		\hat{{\bf R}}_{\rm ML}=o_p(\mathbf{1}/\sqrt{n}),~~\hat{{\bf t}}_{\rm GN}-\hat{{\bf t}}_{\rm ML}=o_p(\mathbf{1}/\sqrt{n}),
		$$
		The proof is given in the Appendix~\ref{Appendix:: little op}.
	\end{theorem}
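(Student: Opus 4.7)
The plan is to follow the classical one-step Newton argument of Le Cam. First I would re-parameterize $\mathrm{SO}(2)$ by the scalar rotation angle $\phi$, setting $\theta=(\phi,{\bf t})\in\mathbb{R}^3$, so that the ML problem \eqref{ML} becomes the unconstrained nonlinear least-squares problem
\begin{equation*}
\min_{\theta}\; f_n(\theta)\;=\;\sum_{i,m}\frac{r_{im}(\theta)^2}{\sigma_{im}^2},\qquad r_{im}(\theta)=d_{im}-\|{\bf a}_m^{\mathcal A}-{\bf R}(\phi){\bf s}_i^{\mathcal B}-{\bf t}\|.
\end{equation*}
Denote by ${\bf J}(\theta)$ the Jacobian of the residual vector, and let ${\bf W}$ be the diagonal weighting by $1/\sigma_{im}^2$. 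The Gauss--Newton step from the initial $\hat\theta=(\hat\phi,\hat{\bf t})$ reads
\begin{equation*}
\hat\theta_{\rm GN}-\hat\theta \;=\; -\bigl({\bf J}(\hat\theta)^{\top}{\bf W}{\bf J}(\hat\theta)\bigr)^{-1}{\bf J}(\hat\theta)^{\top}{\bf W}\,{\bf r}(\hat\theta),
\end{equation*}
and this is what I have to compare with $\hat\theta_{\rm ML}-\hat\theta$.

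Next I would use that the ML estimator satisfies the first-order stationarity condition $\nabla f_n(\hat\theta_{\rm ML})=0$, and Taylor-expand $\nabla f_n$ around $\hat\theta$:
\begin{equation*}
{\bf 0}\;=\;\nabla f_n(\hat\theta)+\nabla^{2}f_n(\hat\theta)(\hat\theta_{\rm ML}-\hat\theta)+\tfrac12 D^{3}f_n(\tilde\theta)[\hat\theta_{\rm ML}-\hat\theta,\hat\theta_{\rm ML}-\hat\theta],
\end{equation*}
for some $\tilde\theta$ on the segment between $\hat\theta$ and $\hat\theta_{\rm ML}$. I would then split $\nabla^{2}f_n={\bf J}^{\top}{\bf W}{\bf J}+{\bf B}$, where ${\bf B}$ collects the terms proportional to the residuals $r_{im}(\hat\theta)$ --- the only difference between the true Hessian and its Gauss--Newton surrogate. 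Subtracting the Gauss--Newton update from the Taylor identity gives
\begin{equation*}
\hat\theta_{\rm GN}-\hat\theta_{\rm ML}\;=\;\bigl({\bf J}^{\top}{\bf W}{\bf J}\bigr)^{-1}\Bigl[{\bf B}(\hat\theta_{\rm ML}-\hat\theta)+\tfrac12 D^{3}f_n(\tilde\theta)[\hat\theta_{\rm ML}-\hat\theta,\hat\theta_{\rm ML}-\hat\theta]\Bigr].
\end{equation*}

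The final step is an order-of-magnitude accounting. Since both $\hat\theta$ and $\hat\theta_{\rm ML}$ are assumed $\sqrt{n}$-consistent for $\theta^o$, we have $\hat\theta_{\rm ML}-\hat\theta=O_p(1/\sqrt n)$. A law-of-large-numbers argument under Assumptions \ref{assumption::noise}--\ref{assumption::deployment_of_anchor} shows $\frac1n{\bf J}(\hat\theta)^{\top}{\bf W}{\bf J}(\hat\theta)$ converges in probability to a positive-definite limit (the rescaled Fisher information), which is where Assumption \ref{assumption::deployment_of_anchor} and the non-collinearity part of Assumption \ref{assumption::unique localizability} are crucial to rule out a degenerate limit. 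For the correction term, the residual ${\bf r}(\hat\theta)={\bf r}(\theta^o)+O_p(1/\sqrt n)$ so that $\frac1n{\bf B}=O_p(1/\sqrt n)$ (because $\mathbb{E}[r_{im}(\theta^o)]=0$), hence ${\bf B}(\hat\theta_{\rm ML}-\hat\theta)=o_p(\sqrt n)\cdot O_p(1/\sqrt n)=o_p({\bf 1})$ after absorbing the $({\bf J}^\top{\bf W}{\bf J})^{-1}=O_p({\bf 1}/n)$. The cubic remainder $D^3f_n(\tilde\theta)[\cdot,\cdot]$ is $O_p(n)\cdot O_p(1/n)=O_p({\bf 1})$, so after multiplication by $({\bf J}^\top{\bf W}{\bf J})^{-1}$ it contributes $O_p(1/n)=o_p({\bf 1}/\sqrt n)$. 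Combining these bounds yields $\hat\theta_{\rm GN}-\hat\theta_{\rm ML}=o_p({\bf 1}/\sqrt n)$, from which the claim for $\hat{\bf R}_{\rm GN}$ and $\hat{\bf t}_{\rm GN}$ follows by the smoothness of $\phi\mapsto{\bf R}(\phi)$.

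The main obstacle I expect is showing that $\frac1n{\bf J}(\theta)^{\top}{\bf W}{\bf J}(\theta)$ has a deterministic, positive-definite limit uniformly in $\theta$ near $\theta^o$. This is exactly an identifiability/Fisher-information condition, and it is where the geometric Assumptions \ref{assumption::anchor distribution}, \ref{assumption::unique localizability} and especially \ref{assumption::deployment_of_anchor} (no line carries all the anchor mass) must be invoked; the bookkeeping for the residual-dependent Hessian piece ${\bf B}$ and the third-derivative remainder is then routine via Chebyshev and Slutsky arguments.
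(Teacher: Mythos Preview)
Your argument is correct and is essentially the paper's own proof: parameterize by the rotation angle, Taylor-expand the ML first-order condition about the $\sqrt n$-consistent starting point, split the Hessian into $J^\top WJ$ plus the residual-weighted piece, and carry out the $O_p$ bookkeeping using the $\sqrt n$-consistency of both $\hat\theta$ and $\hat\theta_{\rm ML}$ (the latter borrowed from Theorem~\ref{theorem::ML}). The only notable difference is that where you invoke a law of large numbers for $\tfrac1n J^\top WJ$ and $\tfrac1n B$, the paper uses the Helly--Bray theorem, which is the appropriate device under Assumption~\ref{assumption::anchor distribution} since that assumption also covers deterministic repeated ranging (Example~\ref{example_fix_anchors}) rather than only i.i.d.\ anchors; also, in your accounting you write $B=o_p(\sqrt n)$ where your own argument gives $B=O_p(\sqrt n)$, but either bound suffices once multiplied by $(J^\top WJ)^{-1}=O_p(1/n)$.
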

	\begin{remark}
		Theorem~\ref{theorem: little op} motivates us to design an as computationally efficient as possible $\sqrt{n}-$consistent estimator and refine it by one step of Gauss-Newton iteration. We discuss these two steps in section~\ref{section::ULS} and ~\ref{Section::one step GN} respectively. 
	\end{remark}
	
	\subsection{Unconstrained Least Squares Estimator}\label{section::ULS}
	Denote the projection matrix onto ${\rm Null}({\bf 1_{M_T}})$ as ${\bf P}={\bf I}_{M_{T}}-({\bf 1}_{M_{T}}{\bf 1}_{M_{T}}^{\top})/M_{T}$. Multiply both sides of~\eqref{vector form} by ${\bf P}$ and eliminate the quadratic term, we formulate the following problem~\cite{chepuri2014rigid}:
	\begin{subequations}\label{PSLS}
		\vspace{-5pt}
		\begin{align}
			\hbox{\textbf{(PSLS)}}~\mathop{\rm min~}\limits_{{\bf{R,t,n}}} ~&\sum_{i=1}^{N} \|{\bf Pd}_i+2{\bf P}{\bf A}^{\top}({\bf Rs}_i^{\mathcal {B}}+{\bf t})\|^{2}_{\Sigma_{{\bf \bar e}_i}}\label{LS-2_objective}\\
			\mathop{\rm s.t.} ~~& \textbf{R}\in {\rm SO}(2), ~\textbf{t} \in \mathbb{R}^2\label{LS-2 constraint-1}
		\end{align}
	\end{subequations}
	where $\Sigma_{{\bf \bar e}_i}$ is the covariance matrix for ${\bf Pe}_i$. 
	
	The rotation matrix $\mathbf{R}$ has a nice structure, such that we can parameterize ${\bf R}$ by an angle $\theta\in[0,2\pi)$:~
	\begin{equation}\label{eqn::angle paramerization}
		{\bf R}(\theta)\triangleq \begin{bmatrix}
			\cos{\theta} & -\sin{\theta} \\
			\sin{\theta} & \cos{\theta} 
		\end{bmatrix} .
	\end{equation}
	Motivated by~\eqref{eqn::angle paramerization}, we can use a unit-length vector $\textbf{y}=(y_1,y_2)\in\mathbb{R}^2$ to parameterize $\mathbf{R}$, where $y_1,y_2$ correspond to $\sin(\theta),\cos(\theta)$ respectively. Denote ${\bf AP}$ as $\bar{\bf{A}}$, ${\bf Pd}_i$ as $\bar{\bf{d}_i}$, and ${\bf Pe}_i$ as $\bar{\bf{e_i}}$, we formulate the following GTRS problem:
	\begin{equation}\label{GTRS}
		\mathop{\rm min~}\limits_{\textbf{y,t}} \|{\bar{\bf{d}}}-\textbf{H}_1\Gamma\textbf{y}-\textbf{H}_2\textbf{t}\|^2_{\Sigma_{\bar e}^{-1}}\quad\text{s.t. }  \|\bf{y}\|^2=1
	\end{equation}
	where $\Gamma=\begin{bmatrix}
		0 & 1 & -1 & 0\\
		1 & 0 & 0 & 1
	\end{bmatrix}^{\top}$,
	${\bar{\bf{d}}}=(\bar{\bf{d}}_1,\dots,\bar{\bf{d}}_N)$,\\ ${\bf H}_1=-2\bf{S}^{\top}\otimes\bar{\bf{A}}^{\top}$, and ${\bf H}_2=-2\bf{1}_N\otimes\bar{\bf{A}}^{\top}.
	$
	
	The matrix $\Sigma_{{\bf \bar e}}$ is dense and dependent on true distance $d_{im}^{o}$'s~\cite{chepuri2014rigid}. We discard the constraint and covariance for efficiency and simplicity and solve the resultant LS problem.
	\begin{lemma}\label{lamma:rank}
		Under Assumption~\ref{assumption::unique localizability}, the design matrix ${\bf H}=[{\bf H}_1\Gamma, \bf{H}_2]$ is full column rank, and the unique solution to the resultant LS problem is given by:
		\begin{equation}\label{consitent estimator}
			\hbox{\textbf{(ULS)}}~~~~~~\left[
			\begin{array}{c}
				\mathbf{\hat{y}} \\
				\mathbf{\hat{t}} \\
			\end{array}
			\right]=(\textbf{H}^{\top}\bf{H})^{-1}\textbf{H}^{\top}{\bf{\bar{d}}},
		\end{equation}
	\end{lemma}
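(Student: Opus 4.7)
The main content of the lemma is the full-column-rank claim for $\mathbf{H}\in\mathbb{R}^{NM_T\times 4}$; once this is established, $\mathbf{H}^{\top}\mathbf{H}$ is invertible, the unconstrained LS problem is strictly convex, and its unique minimizer is given by the normal equations, which is precisely~\eqref{consitent estimator}. So the plan reduces to showing that the homogeneous system $\mathbf{H}_1\Gamma\mathbf{y} + \mathbf{H}_2\mathbf{t} = \mathbf{0}$ forces $\mathbf{y}=\mathbf{0}$ and $\mathbf{t}=\mathbf{0}$.

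The first step is to rewrite this equation in a geometrically transparent form via the Kronecker--vec identity $(\mathbf{B}^{\top}\otimes\mathbf{A})\,\mathrm{vec}(\mathbf{X}) = \mathrm{vec}(\mathbf{A}\mathbf{X}\mathbf{B})$. Comparing $\Gamma$ against~\eqref{eqn::angle paramerization}, I observe that $\Gamma\mathbf{y} = \mathrm{vec}(\mathbf{R}(\mathbf{y}))$, where $\mathbf{R}(\mathbf{y}) \triangleq \begin{bmatrix} y_2 & -y_1 \\ y_1 & y_2 \end{bmatrix}$ is the same rotation-style matrix obtained from $\mathbf{R}(\theta)$ by the substitution $(\sin\theta,\cos\theta)\mapsto(y_1,y_2)$. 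Applying the identity to both blocks of $\mathbf{H}$, the homogeneous equation collapses to
\begin{equation*}
\bar{\mathbf{A}}^{\top}\bigl(\mathbf{R}(\mathbf{y})\mathbf{S} + \mathbf{t}\mathbf{1}_N^{\top}\bigr) = \mathbf{0}.
\end{equation*}

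Next I would peel away the anchor and tag factors in turn. Reading the matrix equation column by column yields $\bar{\mathbf{A}}^{\top}\bigl(\mathbf{R}(\mathbf{y})\mathbf{s}_i^{\mathcal{B}} + \mathbf{t}\bigr) = \mathbf{0}$ for every tag $i$. Since $\bar{\mathbf{A}} = \mathbf{A}\mathbf{P}$ is the centered anchor matrix, Assumption~\ref{assumption::unique localizability} (at least three non-collinear anchors) guarantees that two of its centered columns are linearly independent, so $\mathrm{rank}(\bar{\mathbf{A}})=2$ and $\bar{\mathbf{A}}^{\top}$ has trivial right kernel. Hence $\mathbf{R}(\mathbf{y})\mathbf{s}_i^{\mathcal{B}} + \mathbf{t} = \mathbf{0}$ for all $i$. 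Invoking the second half of the assumption, pick two tags $\mathbf{s}_1^{\mathcal{B}},\mathbf{s}_2^{\mathcal{B}}$ non-collinear with the origin, so in particular $\mathbf{s}_1^{\mathcal{B}} - \mathbf{s}_2^{\mathcal{B}}\neq\mathbf{0}$; subtracting the two corresponding equations gives $\mathbf{R}(\mathbf{y})(\mathbf{s}_1^{\mathcal{B}} - \mathbf{s}_2^{\mathcal{B}}) = \mathbf{0}$. Because $\det\mathbf{R}(\mathbf{y}) = y_1^2 + y_2^2$, the matrix $\mathbf{R}(\mathbf{y})$ is nonsingular unless $\mathbf{y}=\mathbf{0}$, forcing $\mathbf{y}=\mathbf{0}$, and back-substitution immediately yields $\mathbf{t}=\mathbf{0}$.

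The only non-routine step is spotting the Kronecker--vec reformulation that expresses $\mathbf{H}_1\Gamma\mathbf{y} + \mathbf{H}_2\mathbf{t}$ as $-2\,\mathrm{vec}\bigl(\bar{\mathbf{A}}^{\top}[\mathbf{R}(\mathbf{y})\mathbf{S} + \mathbf{t}\mathbf{1}_N^{\top}]\bigr)$; after that, the two halves of Assumption~\ref{assumption::unique localizability} are tailor-made to kill the anchor and tag factors successively. No properties of the noise distribution, the discarded weighting $\Sigma_{\bar{e}}$, or the limit measure $F_\mu$ enter the argument, since full column rank of the design matrix is a purely algebraic statement.
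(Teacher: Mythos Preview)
Your argument is correct, and it differs from the paper's. The paper proceeds by a factorization: it uses the Kronecker rank identity to get $\mathrm{rank}(\mathbf{H}_1)=\mathrm{rank}(\mathbf{S}^\top)\cdot\mathrm{rank}(\bar{\mathbf{A}}^\top)=4$, then writes $\mathbf{1}_N=\mathbf{S}^\top[\alpha,\beta]^\top$ to exhibit $\mathbf{H}_2=\mathbf{H}_1\mathbf{C}$ with $\mathbf{C}=[\alpha,\beta]^\top\otimes\mathbf{I}_2$, and finally checks by hand that the $4\times4$ matrix $[\Gamma,\mathbf{C}]$ is nonsingular. That route leans on $\mathrm{rank}(\mathbf{S})=2$ (hence genuinely needs two tags non-collinear with $O^{\mathcal B}$), and the step $\mathbf{1}_N\in\mathrm{Col}(\mathbf{S}^\top)$ is really written for $N=2$. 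Your vec--Kronecker rewriting $\mathbf{H}_1\Gamma\mathbf{y}+\mathbf{H}_2\mathbf{t}=-2\,\mathrm{vec}\bigl(\bar{\mathbf{A}}^\top[\mathbf{R}(\mathbf{y})\mathbf{S}+\mathbf{t}\mathbf{1}_N^\top]\bigr)$ is cleaner: it works for arbitrary $N\ge 2$, and after peeling off $\bar{\mathbf{A}}^\top$ you only need two \emph{distinct} tags and the observation $\det\mathbf{R}(\mathbf{y})=y_1^2+y_2^2$. So you actually establish the full-column-rank claim under a hypothesis strictly weaker than Assumption~\ref{assumption::unique localizability}, which is consistent with the paper's own remark that the ``non-collinear with the origin'' clause was added only to ease their proof of this lemma.
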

	\begin{theorem} \label{theorem::consistency_of_LS_lin_estimate}
		The \textbf{ULS} estimator is $\sqrt{n}$-consistent, i.e., {\rm
			$$  \left[
			\begin{array}{c}
				\mathbf{\hat{y}} \\
				\mathbf{\hat{t}} \\
			\end{array}
			\right]-  \left[
			\begin{array}{c}
				\mathbf{y}^{o} \\
				\mathbf{t}^{o} \\
			\end{array}
			\right]=O_p(\mathbf{1}/\sqrt{n}).$$}
		The proof is given in Appendix~\ref{Appendix::lemma1} and~\ref{Appendix:: big Op}.
	\end{theorem}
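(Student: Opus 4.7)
The plan is to write the estimation error in closed form and bound each factor separately. Substituting $\bar{\mathbf{d}} = \mathbf{H}\mathbf{x}^o + \bar{\mathbf{e}}$, where I abbreviate $\mathbf{x}^o \triangleq (\mathbf{y}^o,\mathbf{t}^o)$ and $\bar{\mathbf{e}} \triangleq (\bar{\mathbf{e}}_1,\dots,\bar{\mathbf{e}}_N)$, into \eqref{consitent estimator} gives
$$
\hat{\mathbf{x}} - \mathbf{x}^o \;=\; (\mathbf{H}^{\top}\mathbf{H})^{-1}\mathbf{H}^{\top}\bar{\mathbf{e}} \;=\; \Bigl(\tfrac{1}{n}\mathbf{H}^{\top}\mathbf{H}\Bigr)^{-1}\Bigl(\tfrac{1}{\sqrt{n}}\mathbf{H}^{\top}\bar{\mathbf{e}}\Bigr)\cdot\tfrac{1}{\sqrt{n}}.
$$
So it suffices to establish (a) $\tfrac{1}{n}\mathbf{H}^{\top}\mathbf{H}$ converges to a positive-definite matrix, and (b) $\tfrac{1}{\sqrt{n}}\mathbf{H}^{\top}\bar{\mathbf{e}} = O_p(\mathbf{1})$.

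For (a), I would exploit the Kronecker structure: $\mathbf{H}_1^{\top}\mathbf{H}_1 = 4\,\mathbf{S}\mathbf{S}^{\top}\otimes \bar{\mathbf{A}}\bar{\mathbf{A}}^{\top}$, $\mathbf{H}_2^{\top}\mathbf{H}_2 = 4N\,\bar{\mathbf{A}}\bar{\mathbf{A}}^{\top}$, and the cross block has an analogous form. Since $\mathbf{P}$ centers the anchor coordinates, $\bar{\mathbf{A}}\bar{\mathbf{A}}^{\top} = \mathbf{A}\mathbf{P}\mathbf{A}^{\top}$ equals (up to the factor $M_T$) the empirical covariance of $\{\mathbf{a}_m^{\mathcal{A}}\}$. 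Assumption \ref{assumption::anchor distribution} makes this empirical covariance converge to the population covariance $\boldsymbol{\Sigma}_\mu$ of $F_\mu$, and Assumption \ref{assumption::deployment_of_anchor} forbids $\mu$ from being supported on a line, so $\boldsymbol{\Sigma}_\mu \succ 0$. Combined with the full-column-rank statement of Lemma \ref{lamma:rank}, which handles the tag side via Assumption \ref{assumption::unique localizability}, this should give a positive-definite limit for $\tfrac{1}{n}\mathbf{H}^{\top}\mathbf{H}$ with smallest eigenvalue bounded below for all sufficiently large $n$.

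For (b), the key simplification is that the projection is already absorbed into $\bar{\mathbf{A}}$ on both sides: because $\mathbf{P}^2=\mathbf{P}$, one has $\mathbf{H}^{\top}\bar{\mathbf{e}} = \mathbf{H}^{\top}\mathbf{e}$, where $\mathbf{e}$ has independent entries $e_{im}=2\|\mathbf{a}_m^{\mathcal{A}}-\mathbf{R}^o\mathbf{s}_i^{\mathcal{B}}-\mathbf{t}^o\|r_{im}+(r_{im}^2-\sigma_{im}^2)$ of zero mean and finite, uniformly bounded variance under Assumption \ref{assumption::noise} (the anchors staying in a compact region follows from the convergence of $F_m$). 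A direct second-moment calculation then yields
$$
\mathbb{E}\bigl\|\tfrac{1}{\sqrt{n}}\mathbf{H}^{\top}\mathbf{e}\bigr\|^{2} \;=\; \tfrac{1}{n}\operatorname{tr}\!\bigl(\mathbf{H}^{\top}\operatorname{Cov}(\mathbf{e})\,\mathbf{H}\bigr) \;=\; O(1),
$$
and Markov's inequality delivers the $O_p(\mathbf{1})$ bound. Multiplying (a) and (b) produces the claimed $O_p(\mathbf{1}/\sqrt{n})$ rate.

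The main obstacle I expect is (a). Lemma \ref{lamma:rank} is a finite-sample rank statement; turning this into a uniform-in-$n$ lower bound on the smallest eigenvalue of $\tfrac{1}{n}\mathbf{H}^{\top}\mathbf{H}$ requires a careful passage from the empirical second moments of the anchors to their population analogues, and a verification that Assumption \ref{assumption::deployment_of_anchor} really does rule out every degenerate limiting direction of $\boldsymbol{\Sigma}_\mu$. Part (b) is comparatively routine once (a) is in hand and the compactness of the anchor support has been invoked.
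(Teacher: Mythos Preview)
Your proposal is correct and follows essentially the same route as the paper: decompose the error as $(\tfrac{1}{n}\mathbf{H}^{\top}\mathbf{H})^{-1}\cdot\tfrac{1}{n}\mathbf{H}^{\top}\bar{\mathbf{e}}$, control the noise term by a second-moment/Chebyshev bound (the paper's Lemma~\ref{lemma_root_mn_consistent}), and argue convergence of $\tfrac{1}{n}\mathbf{H}^{\top}\mathbf{H}$ via the empirical-to-population passage (the paper invokes Helly--Bray under Assumption~\ref{assumption::anchor distribution}). Your additional observations---the identity $\mathbf{H}^{\top}\bar{\mathbf{e}}=\mathbf{H}^{\top}\mathbf{e}$ from $\mathbf{P}^2=\mathbf{P}$, and the explicit use of Assumption~\ref{assumption::deployment_of_anchor} to secure positive definiteness of the limiting Gram matrix---are refinements the paper leaves implicit, but they do not change the underlying argument.
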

	
	Notice that the estimate ${\bf \hat{y}}$ from~\eqref{consitent estimator} is not constrained to have unit length, so we further project ${\bf \hat{R}}\triangleq\Gamma{
		\bf \hat{y}}$ onto ${\rm SO}(2)$. The matrix projection $\pi$ that maps an arbitrary matrix $\textbf{X}\in\mathbb{R}^{2\times 2}$ onto ${\rm SO}(2)$ is defined as 
	\begin{equation}\label{eqn:def_projection_map}
		\pi(\textbf{X})=\arg\min_{\textbf{W}\in {\rm SO}(2)}\|\textbf{X}-\textbf{W}\|_F^2.
	\end{equation}
	Let the SVD of $\textbf{X}$ be $\textbf{U}{\Sigma}{\bf V}^{\top}$, we have
	\begin{equation}
		\pi(\textbf{X})=\textbf{U}{\rm diag}([1, \det(\textbf{U}{\bf V}^{\top})]) {\bf V}^{\top}
	\end{equation}
	\begin{theorem} \label{theorem:: consistency after projection}
		\vspace{-10pt}
		The	projected estimate generated by~\eqref{eqn:def_projection_map} is $\sqrt{n}$-consistent, i.e., 
		$$
		\pi({\bf \hat{R}})-{\bf R}^{o}=O_p({\bf 1}/\sqrt{n}).
		$$
		The proof is given in Appendix~\ref{Appendix:: projection}.
		\begin{remark}
			By assigning $O^{\mathcal{B}}$ on tag $j$, the measurement model of tag $i$ becomes $d_{jm}=\|\textbf{a}_m^{\mathcal {A}}-\textbf{t}^o\|+r_{jm}$. We can thus estimate translation ${\mathbf t}^o$ first using $d_{jm}$'s, and then substitute $\hat{\mathbf t}$ to estimate the rotation matrix ${\mathbf R}^o$. Similarly, we can first localize the tags and then infer the pose, as in work~\cite{bonsignori2020estimation}. 
			
			The accuracy of the intermediate step is crucial for such divide-and-conquer schemes. Given a $\sqrt{M_T}-$consistent estimate for the intermediate step, the resultant pose estimator can achieve $\sqrt{n}-$consistency, as shown in section~\ref{section:: simulations} and proved in Appendix~\ref{Appendix::DAC}.
		\end{remark}
	\end{theorem}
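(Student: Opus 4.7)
The plan is to exploit the special structure of $\hat{\bf R} = \Gamma \hat{\bf y}$: because of how $\Gamma$ is defined, any matrix of the form $\Gamma {\bf y}$ has the block pattern $\begin{bmatrix} y_2 & -y_1 \\ y_1 & y_2\end{bmatrix}$, which is a scaled rotation matrix with both singular values equal to $\|{\bf y}\|$. Consequently the SVD–based projection in \eqref{eqn:def_projection_map} reduces to a simple normalization, and the theorem reduces to a Taylor expansion of $1/\|\hat{\bf y}\|$ around $\|{\bf y}^o\|=1$.

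First I would observe that since ${\bf R}^o \in {\rm SO}(2)$, we may write ${\bf R}^o = \Gamma {\bf y}^o$ with $\|{\bf y}^o\|=1$, and that Theorem~\ref{theorem::consistency_of_LS_lin_estimate} yields $\hat{\bf y} = {\bf y}^o + \varepsilon$ with $\varepsilon = O_p({\bf 1}/\sqrt{n})$. Then I would carry out the SVD of $\hat{\bf R}$ directly: factoring out $\|\hat{\bf y}\|$, one gets $\hat{\bf R} = \|\hat{\bf y}\|\,{\bf R}(\hat\theta)$ for some $\hat\theta$, so both singular values equal $\|\hat{\bf y}\|$ and a valid SVD is $\hat{\bf R} = {\bf R}(\hat\theta)\,(\|\hat{\bf y}\|{\bf I})\,{\bf I}^\top$. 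Since $\|\hat{\bf y}\| \to 1$ in probability, with probability tending to one $\det({\bf U}{\bf V}^\top) = +1$, hence
\begin{equation*}
\pi(\hat{\bf R}) \;=\; \frac{\hat{\bf R}}{\|\hat{\bf y}\|} \;=\; \Gamma\,\frac{\hat{\bf y}}{\|\hat{\bf y}\|}.
\end{equation*}

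Next I would control $\hat{\bf y}/\|\hat{\bf y}\| - {\bf y}^o$. Expanding,
\begin{equation*}
\|\hat{\bf y}\|^2 \;=\; 1 + 2{\bf y}^{o\top}\varepsilon + \|\varepsilon\|^2 \;=\; 1 + O_p(1/\sqrt{n}),
\end{equation*}
so $\|\hat{\bf y}\| = 1 + O_p(1/\sqrt{n})$ and $1/\|\hat{\bf y}\| = 1 + O_p(1/\sqrt{n})$ by a first-order Taylor expansion of $x \mapsto 1/\sqrt{x}$ around $x=1$ (the remainder is stochastically bounded because $\|\hat{\bf y}\|$ is bounded away from $0$ with probability approaching one). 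Multiplying,
\begin{equation*}
\frac{\hat{\bf y}}{\|\hat{\bf y}\|} - {\bf y}^o \;=\; ({\bf y}^o + \varepsilon)\bigl(1 + O_p(1/\sqrt{n})\bigr) - {\bf y}^o \;=\; O_p({\bf 1}/\sqrt{n}).
\end{equation*}
Premultiplying by the constant matrix $\Gamma$ gives $\pi(\hat{\bf R}) - {\bf R}^o = O_p({\bf 1}/\sqrt{n})$, as required.

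The only real subtlety is justifying that the event $\{\det({\bf U}{\bf V}^\top) = +1\}$ occurs with probability tending to one, which follows from $\det(\hat{\bf R}) = \|\hat{\bf y}\|^2 \to 1$ in probability and the continuity of $\det$; everything else is a routine Taylor-expansion/order-of-magnitude argument. No Lipschitz-continuity lemma for the general polar projection is needed because the structure of $\Gamma$ collapses the projection to a scalar normalization.
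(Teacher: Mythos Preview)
Your argument is correct, but the paper takes a shorter and more general route. Rather than exploiting the $2\times 2$ structure of $\Gamma\hat{\bf y}$ to compute $\pi(\hat{\bf R})$ explicitly, the paper simply uses the optimality of the projection: since ${\bf R}^o\in{\rm SO}(2)$ and $\pi(\hat{\bf R})$ is by definition the nearest point of ${\rm SO}(2)$ to $\hat{\bf R}$, one has $\|\pi(\hat{\bf R})-\hat{\bf R}\|_F\le\|{\bf R}^o-\hat{\bf R}\|_F$, and the triangle inequality then gives $\|\pi(\hat{\bf R})-{\bf R}^o\|_F\le 2\|\hat{\bf R}-{\bf R}^o\|_F = 2\|\Gamma(\hat{\bf y}-{\bf y}^o)\|_2 = O_p(1/\sqrt{n})$. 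This two-line argument works verbatim for projection onto ${\rm SO}(d)$ (or indeed onto any closed set containing ${\bf R}^o$) and avoids the SVD computation, the Taylor expansion of $1/\|\hat{\bf y}\|$, and the determinant-sign discussion altogether. Your approach, on the other hand, yields the explicit formula $\pi(\hat{\bf R})=\hat{\bf R}/\|\hat{\bf y}\|$, which is informative in its own right and makes transparent why the $2$D projection is so well-behaved; it just takes more work than is strictly necessary for the stated $O_p$ conclusion.
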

	\subsection{One Step of the Gauss-Newton Iteration}\label{Section::one step GN}
	Given a $\sqrt{n}$-consistent estimate from the first step, we implement one step of Gauss-Newton iteration on the \textbf{ML} problem~\eqref{ML} and obtain the \textbf{GN-ULS} estimator. Using~\eqref{eqn::angle paramerization}, we can transform problem~\eqref{ML}~into an unconstrained one:
	\begin{equation}\label{ULS}
		\mathop{\rm min~}\limits_{\theta, {\bf t}} ~\sum_{i=1}^{N}\sum_{m=1}^{M_T}\frac{  (d_{im}-\|{\bf a}_{m}^{\mathcal{A}}-{\bf L}_i{\rm vec}\left({\bf R}(\hat{\theta}+\theta)\right)-{\bf t}\|)^2}{\sigma_{im}^2}
	\end{equation}
	where ${\bf L}_i=({\bf s}_i^{\mathcal{B}}\otimes {\bf I}_2)^{\top}\in\mathbb{R}^{2\times 4}$ and $\mathbf{R}(\hat{\theta})=\mathbf{\pi(\hat{R})}$. 
	
	We use $\theta=0, \mathbf{t}=\hat{{\bf t}}$ as the initial value. Write $f_{im}(\theta, {\bf t})\triangleq{\bf a}_{m}^{\mathcal{A}}-{\bf L}_i{\rm vec}\left({\bf R}(\hat{\theta}+\theta)\right)-{\bf t}$ and $f_{im}^{(0)}\triangleq f_{im}(0, \hat {\bf t})$. The derivatives of $\|f_{im}^{(0)}\|$ w.r.t $\theta$ and ${\bf t}$ are:
	\begin{equation*}
		\frac{\partial \|f_{im}^{(0)}\|}{\partial (\theta,{\bf t})}=\begin{bmatrix}
			-\frac{1}{\|f_{im}^{(0)}\|}
			\Psi^{\top} ({\bf I}_2\otimes \hat{\bf 
				R}^{\top})  {\bf L}_i^{\top}\,f_{im}^{(0)}\\
			-\frac{1}{\|f_{im}^{(0)}\|} f_{im}^{(0)}
		\end{bmatrix}
	\end{equation*}
	where $\Psi=\frac{\partial {\rm vec}({\bf R}(0))}{
		\partial \bf \theta}=\begin{bmatrix}
		0~1~-1~0
	\end{bmatrix}^{\top}$.
	Stacking the rows $\frac{\partial \| f_{im}(0,{\bf\hat t})\|}{\partial (\theta,{\bf t})^{\top}}$ gives the matrix ${\bf J}_0$. Stacking $\|f_{im}^{(0)}\|$ gives the vector ${\bf f}(0,\hat{\bf t})$. Denote the covariance matrix of $r_{im}$ as $\Sigma_{n}$, the one step of Gauss-Newton iteration $(\hat{\theta},\hat{{\bf t}})_{\rm GN}$ writes:
	\begin{equation}\label{eqn::one step GN}
		(\hat{\theta},\hat{{\bf t}})_{\rm GN}=
		(0,\hat{\bf t})+
		({\bf J}_0^{\top}\Sigma_{n}^{-1}{\bf J}_0)^{-1}{\bf J}_0^{\top}\Sigma_{n}^{-1}\big({\bf d}-{\bf f}(0,\hat{{\bf t}})\big)
	\end{equation}
	as such, we obtain the \textbf{GN-ULS} estimates, 
	\begin{equation}\label{GN-ULS}
		\hbox{\textbf{(GN-ULS)}}~~~~~~\hat{{\bf R}}_{\rm GN}={\bf R}(\hat{\theta}+\hat{\theta}_{\rm GN}),~~ {\bf\hat{t}}_{\rm GN}={\bf\hat{t}}_{\rm GN}
	\end{equation}
	We summarize the \textbf{GN-ULS} estimator in Algorithm~\ref{algorithm: GN-ULS}.
	\begin{algorithm}
		\caption{Unconstrained Least Squares estimator refined by one-step of Gauss-Newton iteration~(GN-ULS)}
		\label{algorithm: GN-ULS}
		\begin{algorithmic}[1]
			\Statex{\bf Input:} $d_{im}$, $\sigma_{im}$, ${\bf a}_m^{\mathcal {A}}$ and ${\bf s}_i^{\mathcal {B}}$.
			\Statex{\bf Output:} the estimates of ${\bf R^o}$ and ${\bf t}^o$.
			
			\State Construct ${\bf \bar{d}}\in\mathbb{R}^{n\times 1}$ and ${\bf H}\in\mathbb{R}^{n\times 4}$.
			
			\State Derive the \textbf{ULS} estimator as $(\textbf{H}^{\top}\bf{H})^{-1}\textbf{H}^{\top}{\bf{\bar{d}}}$.
			
			\State Project the \textbf{ULS} estimate onto $SO(2)$~\eqref{eqn:def_projection_map}.
			
			\State Construct ${\bf J}_0\in\mathbb{R}^{n\times 3}$ and ${\bf f}(0,{\bf\hat{t}})\in\mathbb{R}^{n\times 1}$.
			
			\State Implement one step of Gauss-Netwon iteration~\eqref{eqn::one step GN}.
			
			\State Obtain the \textbf{GN-ULS} estimator~\eqref{GN-ULS}.
		\end{algorithmic}
	\end{algorithm}
	
	\section{Statistical Efficiency Analysis}\label{section::Asymptotic ML}
	According to Theorem~\ref{theorem: little op}, the \textbf{GN-ULS} estimates converge to the ML estimates as $n$ increases. The following theorem describes the \textbf{GN-ULS} estimator's statistical efficiency.

	\begin{theorem}\label{theorem::ML}
		Under assumptions~\ref{assumption::anchor distribution}-\ref{assumption::deployment_of_anchor}, the \textbf{ML} estimates, therefore the \textbf{GN-ULS} estimates are asymptotically efficient, i.e., as measurement number $n$ increases, 
		\begin{align*}
			\text{CRLB}^{-1/2}\begin{bmatrix}
				{\rm vec}(\hat{{\bf R}}_{\rm ML}-{\bf R}^o)\\
				\hat{{\bf t}}_{\rm ML}-{\bf t}^o
			\end{bmatrix}&\xrightarrow{D} \mathcal{N}\left({\bf 0},{\bf I}\right),\\		\text{CRLB}^{-1/2}\begin{bmatrix}
				{\rm vec}(\hat{{\bf R}}_{\rm GN}-{\bf R}^o)\\
				\hat{{\bf t}}_{\rm GN}-{\bf t}^o
			\end{bmatrix}&\xrightarrow{D} \mathcal{N}\left({\bf 0},{\bf I}\right).
		\end{align*}
	The proof of Theorem~\ref{theorem::ML} is given in Appendix~\ref{Appendix:: theorem ML}. The formula for the CRLB and insights into sensor deployment from the view of CRLB are given in Appendix~\ref{Appendix::CRLB}.
	\end{theorem}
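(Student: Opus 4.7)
The plan is to prove asymptotic efficiency of the \textbf{ML} estimator directly, and then transfer the conclusion to the \textbf{GN-ULS} estimator through Theorem~\ref{theorem: little op}. Using the angle parameterization~\eqref{eqn::angle paramerization}, I would first recast \eqref{ML} as an unconstrained minimization of the negative log-likelihood $\ell_n(\phi)$ in $\phi=(\theta,{\bf t})\in\mathbb{R}^3$, which places the problem in the classical framework of $M$-estimator asymptotics and simplifies the later delta-method step that converts limit statements on $\phi$ into limit statements on $({\rm vec}({\bf R}),{\bf t})$.

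I would then follow the standard two-stage argument for MLE asymptotics. First, establish consistency of $\hat{\phi}_{\rm ML}$ by showing that $n^{-1}\ell_n(\phi)$ converges uniformly on compact sets to a deterministic limit $\bar{\ell}(\phi)$ whose unique minimizer is $\phi^o$; uniform convergence follows from Assumption~\ref{assumption::anchor distribution} together with a standard uniform law of large numbers, and uniqueness is the identifiability statement encoded in Assumptions~\ref{assumption::unique localizability} and~\ref{assumption::deployment_of_anchor}. Second, Taylor-expand the score $\nabla \ell_n$ at $\phi^o$: under Assumption~\ref{assumption::noise}, a CLT applied to $n^{-1/2}\nabla \ell_n(\phi^o)$ together with a LLN applied to $n^{-1}\nabla^2\ell_n$ on a neighborhood of $\phi^o$ yields $\sqrt{n}(\hat{\phi}_{\rm ML}-\phi^o)\xrightarrow{D}\mathcal{N}({\bf 0},\bar{\mathcal{I}}(\phi^o)^{-1})$, where $\bar{\mathcal{I}}$ denotes the Fisher information density associated with $F_\mu$.

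The main obstacle I expect is verifying that the limiting information matrix $\bar{\mathcal{I}}(\phi^o)$ is positive definite. This is precisely the role of Assumption~\ref{assumption::deployment_of_anchor}: if $\mu$ were concentrated on a single line, the unit bearing vectors $({\bf a}_m^{\mathcal{A}}-{\bf R}^o{\bf s}_i^{\mathcal{B}}-{\bf t}^o)/\|{\bf a}_m^{\mathcal{A}}-{\bf R}^o{\bf s}_i^{\mathcal{B}}-{\bf t}^o\|$ appearing in each row of the score would lie in a one-dimensional subspace, and $\bar{\mathcal{I}}$ would drop rank in the translational component. Combined with the non-collinear-tag condition in Assumption~\ref{assumption::unique localizability}, which rules out a rotational null direction, this yields strict positive definiteness. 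The argument is structurally analogous to the rank computation for ${\bf H}$ in Lemma~\ref{lamma:rank}, but it must be carried out for the noise-free ML Hessian rather than for the linearized design matrix; I would most likely proceed by contradiction, assuming a nonzero null direction of $\bar{\mathcal{I}}(\phi^o)$ and deducing that $\mu$ must be supported on a line, thereby contradicting Assumption~\ref{assumption::deployment_of_anchor}.

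Finally, to transfer the result to \textbf{GN-ULS}, Theorem~\ref{theorem: little op} gives $\hat{\phi}_{\rm GN}-\hat{\phi}_{\rm ML}=o_p(\mathbf{1}/\sqrt{n})$, so Slutsky's theorem preserves the limiting normal distribution. Applying the delta method with the smooth map $\theta\mapsto{\rm vec}({\bf R}(\theta))$ converts the limit in $\phi$ back into the stated limit in $({\rm vec}({\bf R}),{\bf t})$, and whitening by $\text{CRLB}^{-1/2}$, i.e.\ the square root of the inverse asymptotic covariance derived in Appendix~\ref{Appendix::CRLB}, produces the standard-normal limits asserted for both $\hat{\phi}_{\rm ML}$ and $\hat{\phi}_{\rm GN}$.
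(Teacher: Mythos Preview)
Your overall plan is sound and takes a genuinely different route from the paper. The paper works throughout in the constrained parameterization $\Theta=({\rm vec}({\bf R}),{\bf t})\in\mathbb{R}^6$ with ${\bf R}\in{\rm SO}(2)$: it proves consistency by showing the limiting objective $\|{\bf d}^o(\Theta)-{\bf d}^o(\Theta^o)\|_t^2$ has a unique minimizer (the key geometric step is that equal ranges at $\Theta\neq\Theta^o$ would force the anchor measure $\mu$ onto a perpendicular bisector, contradicting Assumption~\ref{assumption::deployment_of_anchor}), and then invokes Crowder's constrained-MLE asymptotics~\cite{crowder1984constrained} to obtain the constrained CRLB directly. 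Your route through the unconstrained angle parameterization $\phi=(\theta,{\bf t})$ with standard $M$-estimator theory is more elementary---no constrained Fisher-information machinery---at the price of an extra delta-method step at the end to recover the statement in $({\rm vec}({\bf R}),{\bf t})$. Both are legitimate.

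One point in your sketch needs correction. Your heuristic for positive definiteness of $\bar{\mathcal{I}}(\phi^o)$ is wrong: anchors lying on a line $\mathcal{L}$ does \emph{not} force the unit bearing vectors $({\bf a}_m-{\bf R}^o{\bf s}_i-{\bf t}^o)/\|\cdot\|$ into a one-dimensional subspace, since each bearing is measured from a tag position that is generically off $\mathcal{L}$. The correct argument is that a null direction $(\delta\theta,\delta{\bf t})$ of $\bar{\mathcal{I}}(\phi^o)$ forces ${\bf u}_{im}^\top\big({\bf R}'(\theta^o){\bf s}_i\,\delta\theta+\delta{\bf t}\big)=0$ for $\mu$-a.e.\ anchor and every tag; if the displacement ${\bf v}_i={\bf R}'(\theta^o){\bf s}_i\,\delta\theta+\delta{\bf t}$ is nonzero for some $i$, this pins $\mu$ to the line through ${\bf R}^o{\bf s}_i+{\bf t}^o$ orthogonal to ${\bf v}_i$, contradicting Assumption~\ref{assumption::deployment_of_anchor}; and ${\bf v}_i=0$ for all $i$ contradicts Assumption~\ref{assumption::unique localizability}. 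The paper itself does not spell this step out either---it asserts nonsingularity ``given the identifiability of the problem'' and defers to~\cite{crowder1984constrained}---so your proposal is not more incomplete than the original here, but the geometric picture you drew should be replaced.
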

	\begin{remark}
		This theorem establishes the proposed estimator's optimality and legitimates us to approximate estimation covariance by CRLB, which is a vital problem in sensor fusion schemes. Although we need the true pose to calculate CRLB, we can reasonably use $\hat{\bf R}_{\rm GN},\hat{\bf t}_{\rm GN}$ as an alternative. 
	\end{remark}

	\section{Simulations and Discussions}\label{section:: simulations}
	We verify the asymptotic efficiency of the proposed \textbf{GN-ULS} estimator by comparing the root mean square error~(RMSE) with the lower bound, which is the square root of the trace of CRLB~\cite{chepuri2014rigid}, denoted as $\sqrt{CRLB}$. We compare with previous works~\cite{chen2015accurate} and \cite{jiang2019sensor}, denoted as the \textbf{GTRS} and the \textbf{GN-SDP} respectively. We also compare with the divide-and-conquer approach that localizes the tags first. For the implementation of this estimator, denoted as \textbf{DAC}, we follow work\cite{zeng2022global} for localization and solve a least squares problem for pose estimation. Given a $\sqrt{M_T}-$consistent position estimator, this \textbf{DAC} estimator can be proved to be $\sqrt{n}-$consistent, detailed in Appendix~\ref{Appendix::DAC}.
	\subsection{Simulation Setup}
	In our simulations, there are $M=3$ anchors deployed at $[50,0]^{\top}$, $[50,50]^{\top}$ and $[0,50]^{\top}$ in the global frame. There are 
	$N$ = 2 tags deployed at $[3,0]^{\top}$ and $[3,3]^{\top}$ in the local frame. The true pose is ${\bf t}^o =[0,25]^{\top}$ and ${\theta}^o =60^{\circ}$. 
	
	We run $L=1000$ Monte-Carlo experiments for each setting and report the average results. We use the chordal distance~\cite{hartley2013rotation} to calculate the RMSE for the rotation matrix:
	$$\text{RMSE}({\bf R})=\sqrt{
		\frac{1}{L}\sum_{l=1}^L \|{\bf \hat{R}}-{\bf R}^o\|_F^2
	}$$
	\subsection{Simulation Results}
	\subsubsection{Asymptotic efficiency under repeated ranging} We increase the number $T$ of repeated ranging. The noise standard deviation ${\sigma}_{mi}$'s are set to be $0.05[1, 2, 3, 4, 5, 6]$. As shown in Fig.~\ref{fig::trial1_a}, the \textbf{ULS} estimator and the \textbf{DAC} estimator are $\sqrt{n}-$consistent but not asymptotically efficient. The \textbf{GTRS} estimator deviates from the lower bound under large samples, and the main reason is that the estimate is refined on the \textbf{SLS} problem~\eqref{SLS} but not on the \textbf{ML} problem~\eqref{ML}. The \textbf{GN-ULS} and the \textbf{DAC} estimator are significantly more efficient with $O(n)$ complexity, as shown in Figure.~\ref{fig::trial1_b}. It also takes $O(n)$ computation to construct the optimization problems for the $\textbf{GN-ULS}$ and the $\textbf{GN-SDP}$ estimator, but solving the problems dominates the computation cost.
	
	\begin{figure} [tbp]
		\centering
		\vspace{2pt}
		\subfloat[\label{fig::trial1_a}]{
			\includegraphics[width=0.45\textwidth,height=3cm]{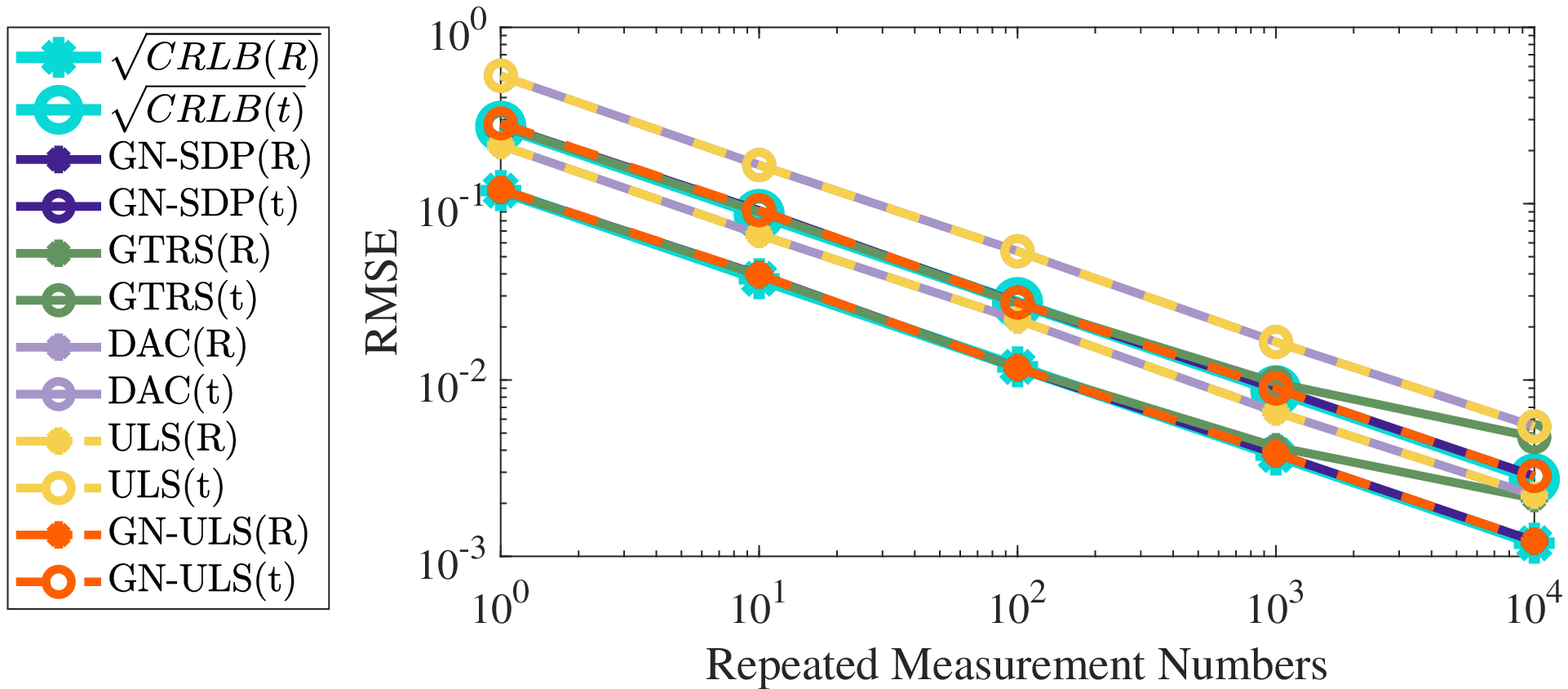}}
		\quad
		\subfloat[\label{fig::trial1_b}]{
			\includegraphics[width=0.45\textwidth,height=3cm]{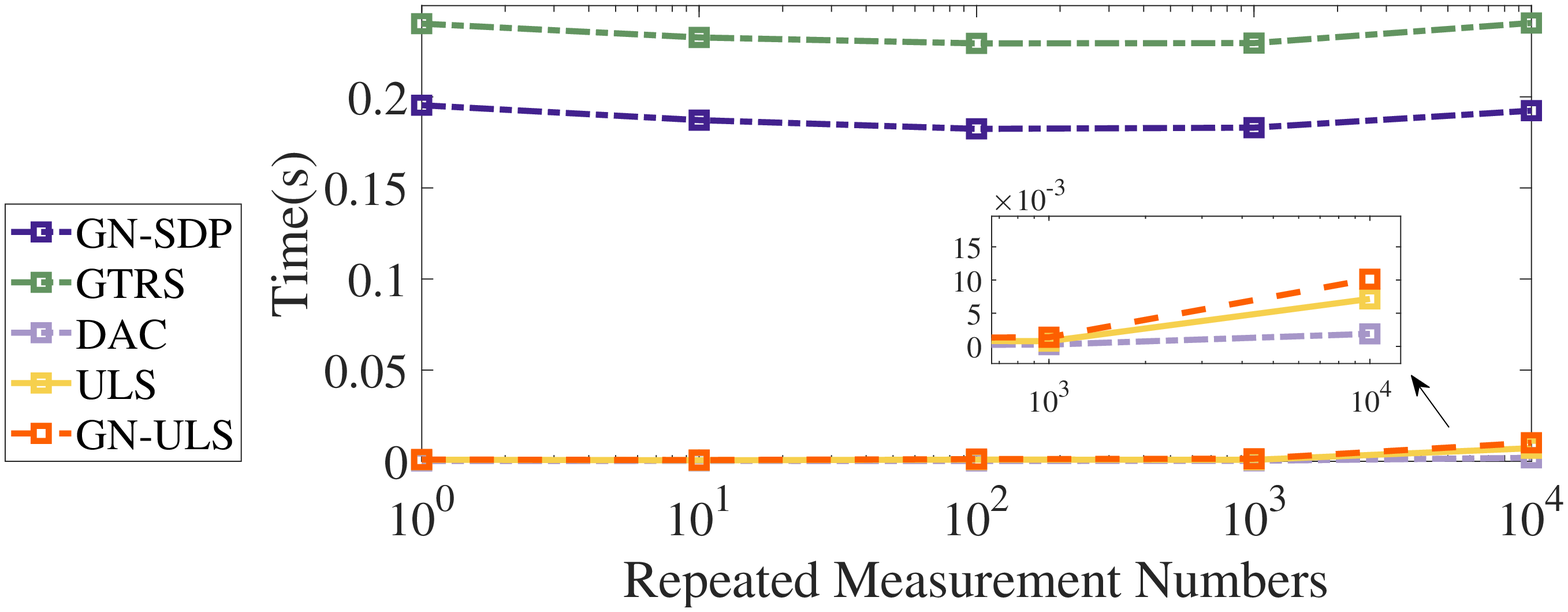}}
		\caption{Performance under repeated ranging. (a)~RMSE of different pose estimators and the lower bound as the number $T$ of repeated ranging increases. (b)~Average computation time of different pose estimators as $T$ increases.}
		\label{fig::trial1} 
		\vspace{-10pt}
	\end{figure}
	\subsubsection{Asymptotic efficiency under numerous anchors}
	We deploy new anchors uniformly on the simulation plane. The standard deviation $\sigma_{im}$ is set to be $0.1$. As shown in Fig.~\ref{fig::trial2}, the \textbf{GTRS} estimator deviates from the lower bound. When we deploy $10000$ anchors, the MATLAB CVX toolbox reports the SDP problem as infeasible. The same instability problem occurs in Fig.~\ref{fig::trial3} under very small noise.
	
	\begin{figure} [tbp]
		\vspace{2pt}
		\centering
		\subfloat[\label{fig::trial2}]{
			\includegraphics[width=0.45\textwidth,height=3cm]{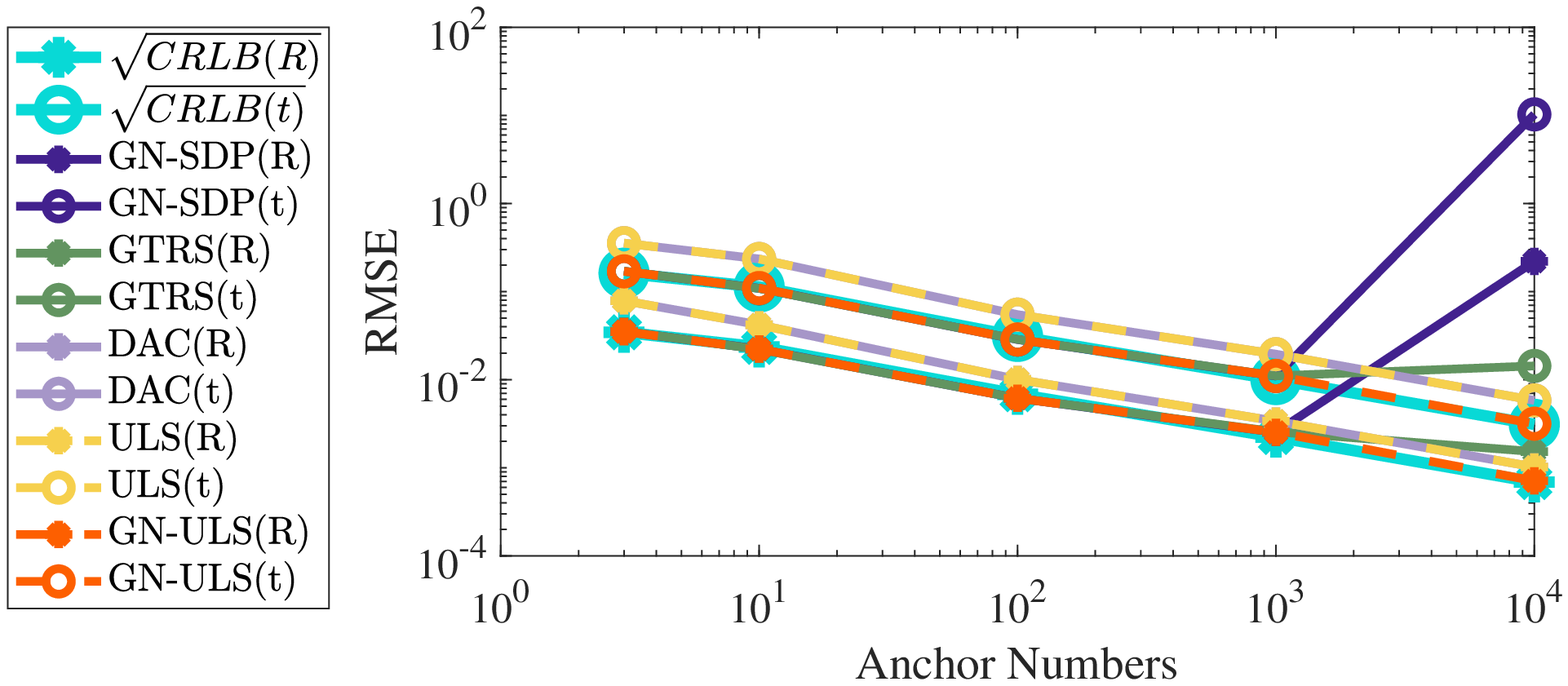}}
		\quad
		\subfloat[\label{fig::trial3}]{
			\includegraphics[width=0.45\textwidth,height=3cm]{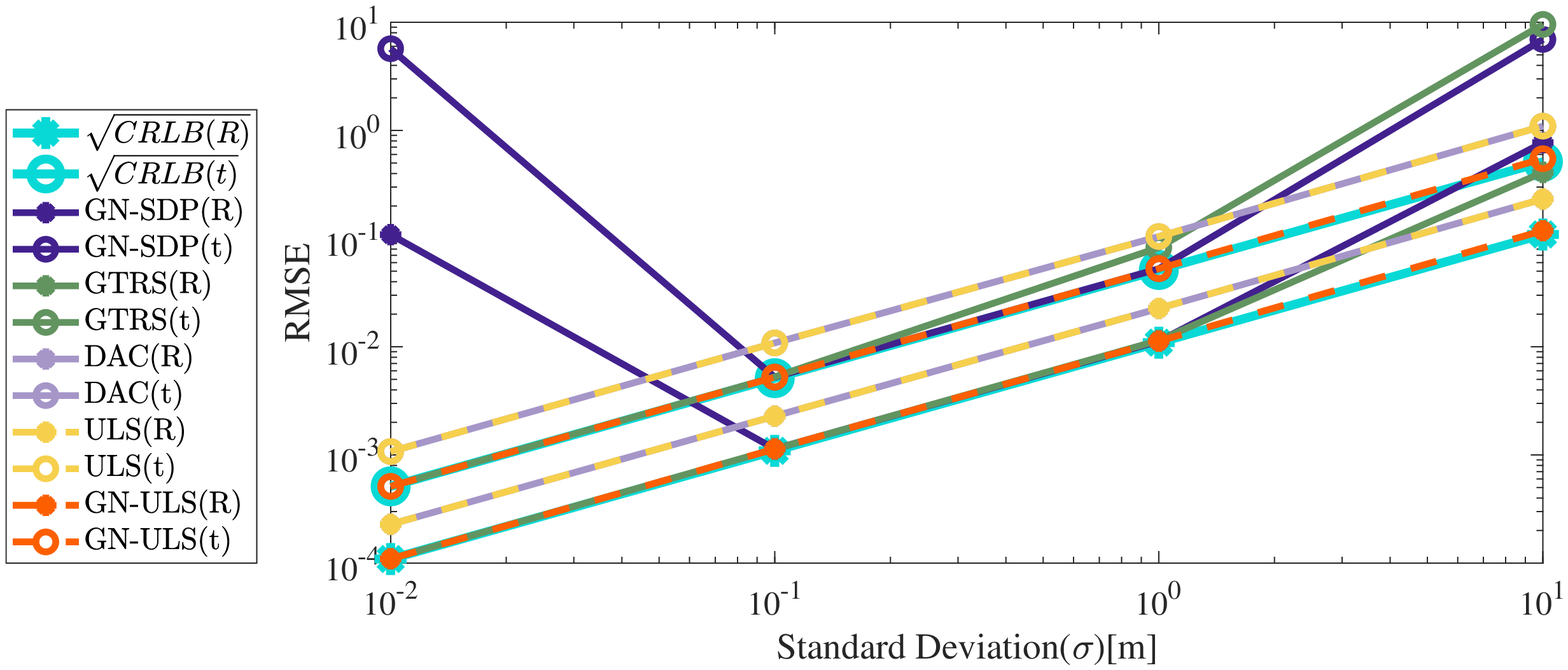}}
		\caption{Performance with numerous anchors and under different noise levels. (a)~RMSE of different pose estimators and the lower bound when the number $M$ of anchors increases. (b)~RMSE of different pose estimators and the lower bound as the standard deviation increases.}
		\label{fig::trial23} 
		\vspace{-10pt}
	\end{figure}
	\subsubsection{Asymptotic efficiency under large noise}
	We adjust the standard deviation $\sigma_{im}$ from $0.01$ to $10$, and set $T=1000$. As shown in Fig.~\ref{fig::trial3}, \textbf{GTRS} and \textbf{GN-SDP} deviate from the lower bound under large nose, this is because they omit $r_{im}^2$ in~\eqref{smodel.3} and use noisy measurements to calculate the covariance matrix $\Sigma_{{\bf \bar e}_i}$. 
	\subsection{Discussions}
	As the simulation results indicate, the \textbf{GN-ULS} attains theoretical lower bound, and performs better than computationally more complex estimators in stability and accuracy. As guaranteed by Theorem~\ref{theorem: little op}, the \textbf{DAC} estimator refined by one step of Gauss-Newton can perform comparably well to \textbf{GN-ULS}. But the \textbf{DAC} estimator appears more sensitive to outliers in dynamic experiments, as shown in Section~\ref{section::experiments}.
	
	\section{Experiments}\label{section::experiments}
	In this section, we introduce data collection and present experimental results on static and dynamic datasets.
	\subsection{System Overview}
	\begin{figure}[!bp]
		\centering
		\includegraphics[width=0.43\textwidth]{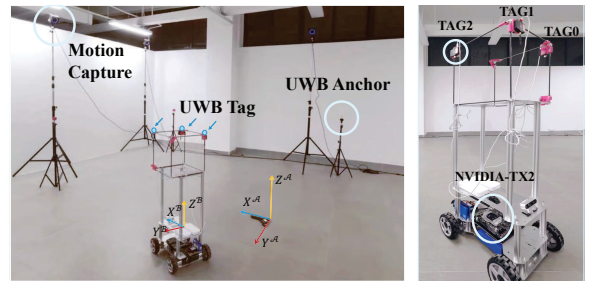}
		\caption{Experiment Setup}
		\label{fig::Experiment Setup}
		\vspace{-10pt}
	\end{figure}
	Fig.~\ref{fig::Experiment Setup} presents the experimental system and environment with an overall volume of $10m\times6m\times4m$. The system consists of a motion capture system~(OptiTrack: X22), UWB~(NoopLoop: LinkTrack), an Ackerman trolley platform, and an embedded processor~(NVIDIA: TX2). We use a carbon fibre cube frame, rather than directly using the trolley, as the rigid body to avoid blocking the UWB signal. Three UWB tags are fixed on the cube at approximately the same height as the eight anchors deployed in the environment.
	\subsection{Data Collection}
	We refer to the motion capture frame as the global frame. The UWB ranging frequency is 100Hz, and the motion capture system provides the ground truth of the pose at 120Hz, with $\pm0.15$ millimeters and $\pm0.5$ degrees claimed accuracy\footnote{https://optitrack.com/cameras/primex-22/}. During experiments, NVIDIA-TX2 unpacks the UWB data through its serial port and collects the motion capture system data through TCP. In dynamic datasets, we synchronize measurements using the system time of TX2 and perform interpolation to align the motion capture measurements with the UWB measurements\footnote{	We observe a positive bias between estimated yaw and the ground truth on all dynamic datasets and for all compared methods. We believe this phenomenon is due to imperfect synchronization caused by processing delay and unreliable ground truth for yaw when reflective markers are hidden from some cameras. As a remedy, we compensate all estimates by a negative degree on dynamic datasets.}
	
	\subsection{UWB Calibration}
	UWB ranging measurement is practically modeled as:
	\begin{equation}\label{model uwb}
		\hat{d}=d^o+f(d^o)+e,
	\end{equation}
	where $f(d^o)$ is a distance-related bias, and $e$ is a zero-mean Gaussian noise. We quiescent the trolley for a period of time and use the sample variance to estimate the standard deviation of $e$. The more demanding task is calibrating $f(d^o)$, which we assume to be a linear function of $d^o$~\cite{bellusci2008model}. We control the trolley to move around in the environment, collect calibration datasets and use the least squares method to fit $f(d^o)$. Considering the complex communication environment indoors, we implement outlier rejection before calibration. Similar to the methods in~\cite{cao2021vir,fang2020graph}, a range measurement at instant $t$ is rejected as an outlier if 
	\begin{equation}\label{eqn::rejection}
		d_t>\min{\{d_{t-k},\dots,d_{t-1}\}}+\frac{kv_{\rm max}}{f}+0.1,
	\end{equation}
	where $k$ is the length of the time window, $v_{\rm max}$ is the velocity upper bound during the experiment, $f$ is the ranging frequency, and $0.1m$ is a general error bound of UWB.

	\subsection{Static Datasets and Pose Estimation Results}
	We place the trolley at 7 different sites and change the orientation from $0^{\circ}$ to $300^{\circ}$ at an interval of approximately $60^{\circ}$. In total, we collect 42 static datasets, each lasting for around 100 seconds. We calculate the RMSE on all datasets and compare the average result. We choose two tags and three anchors and use the centimeter and the degree as units for Fig.~\ref{fig::static}. The result shows that all methods achieve similar accuracy to the ground truth system.
	\begin{figure}
		\vspace{5pt}
		\centering
		\includegraphics[width=0.43\textwidth,height=5cm]{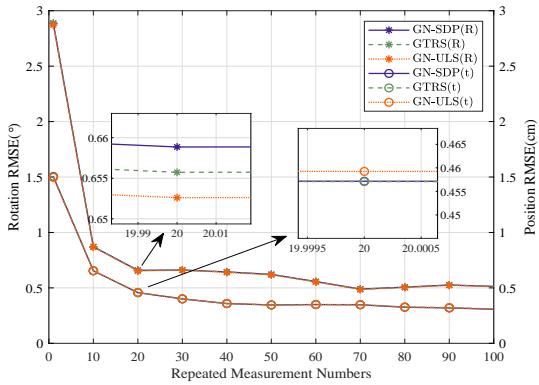}
		\caption{Static experiment under repeated ranging}
		\label{fig::static}
		\vspace{-15pt}
	\end{figure}
	\subsection{Dynamic Datasets and Pose Estimation Results}
	We control the trolley to move at different speeds~(0.49m/s, 0.23m/s, and 0.11m/s), and collect fast, medium, and slow datasets. We implement interpolation when an outlier occurs to provide real-time estimates. It is not reasonable to use repeated measurements in the dynamic case. Instead, we adopt all three tags and eight anchors.
	
	Fig.~\ref{fig::dynamic} visualizes the results on the fast dataset, where \textbf{GN-ULS} achieves an average RMSE of $3.97$ degrees and $3.01$ centimeters using 24 range measurements between eight anchors and three tags. Overall results on the three datasets are summarized in Table~\ref{table::dynamic experiment}, where $\textbf{GN-DAC}$ represents one step of Gauss-Newton iteration based on the $\textbf{DAC}$ estimator.

	\begin{table}[h]
		\centering
		\caption{Position and Rotation RMSE on dynamic datasets}
		\begin{tabular}{l|cllcllc|ccc}
			\toprule
			\hline
			\multicolumn{1}{c|}{\multirow{3}{*}{\textbf{Method}}} &
			\multicolumn{7}{c|}{\textbf{Position RMSE {[}cm{]}}} &
			\multicolumn{3}{c}{\textbf{Rotation RMSE {[}deg{]}}} \\ \cline{2-11} 
			\multicolumn{1}{c|}{} &
			\multicolumn{3}{c|}{\multirow{2}{*}{Fast}} &
			\multicolumn{3}{c|}{\multirow{2}{*}{Mid}} &
			\multirow{2}{*}{Slow} &
			\multicolumn{1}{c|}{\multirow{2}{*}{Fast}} &
			\multicolumn{1}{c|}{\multirow{2}{*}{Mid}} &
			\multirow{2}{*}{Slow} \\
			\multicolumn{1}{c|}{} &
			\multicolumn{3}{c|}{} &
			\multicolumn{3}{c|}{} &
			&
			\multicolumn{1}{c|}{} &
			\multicolumn{1}{c|}{} &
			\\ \hline
			ULS &
			\multicolumn{3}{c|}{3.73} &
			\multicolumn{3}{c|}{3.55} &
			3.48 &
			\multicolumn{1}{c|}{5.68} &
			\multicolumn{1}{c|}{4.74} &
			5.53 \\
			DAC & 
			\multicolumn{3}{c|}{3.60} &
			\multicolumn{3}{c|}{3.07} &
			3.14&
			\multicolumn{1}{c|}{11.55} &
			\multicolumn{1}{c|}{11.74} &
			11.18\\
			GN-ULS &
			\multicolumn{3}{c|}{3.01} &
			\multicolumn{3}{c|}{3.00} &
			2.59 &
			\multicolumn{1}{c|}{\textbf{3.97}} &
			\multicolumn{1}{c|}{3.40} &
			\textbf{3.75} 			\\
			GN-DAC &
			\multicolumn{3}{c|}{3.07} &
			\multicolumn{3}{c|}{3.07} &
			2.65&
			\multicolumn{1}{c|}{4.39} &
			\multicolumn{1}{c|}{3.80} &
			4.21\\
			GN-SDP &
			\multicolumn{3}{c|}{\textbf{3.00}} &
			\multicolumn{3}{c|}{\textbf{3.00}} &
			\textbf{2.58} &
			\multicolumn{1}{c|}{3.99} &
			\multicolumn{1}{c|}{\textbf{3.37}} &
			3.80 \\
			GTRS &
			\multicolumn{3}{c|}{3.22} &
			\multicolumn{3}{c|}{3.27} &
			2.83 &
			\multicolumn{1}{c|}{4.42} &
			\multicolumn{1}{c|}{3.97} &
			4.13\\ \hline
			\bottomrule
		\end{tabular}
		\label{table::dynamic experiment}
	\end{table}
	
	\begin{figure}
		\vspace{5pt}
		\centering
		\includegraphics[width=0.43\textwidth,height=5cm]{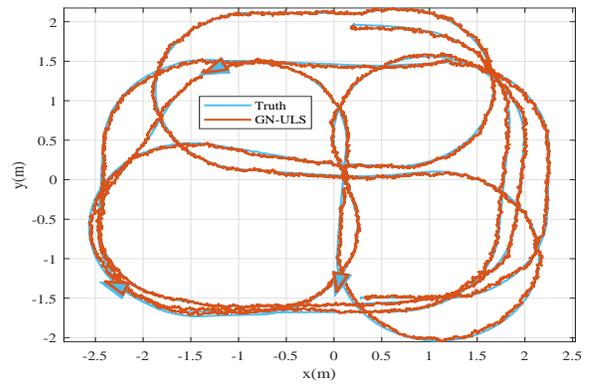}
		\caption{Dynamic experiment on the fast dataset}
		\label{fig::dynamic}
		\vspace{-15pt}
	\end{figure}
	\subsection{Discussions}
	All compared methods achieve similar accuracy in the experiments, given the standard deviation of noise in the magnitude of a centimeter. However, the \textbf{GN-ULS} and \textbf{GN-DAC} estimators significantly reduce the computation time. We attribute this advantage to the insights provided by Theorem~\ref{theorem: little op}. We also notice that, the \textbf{DAC} estimator is less robust to outliers than the \textbf{ULS} estimator. 
	
	The estimators' performance on the dynamic datasets is not as good as on the static datasets. The main reason is the complex indoor environment, where the distance-related bias $f(d^o)$ varies at different positions and orientations due to obstruction and signal reflection from the walls. 
	
	\section{Conclusion and Future Work}\label{section::conclusion}
	This work studies planar pose estimation using UWB range measurements. Grounded on a two-step scheme, we design an asymptotically efficient pose estimator \textbf{GN-ULS}. The proposed estimator defeats previous works in computational efficiency, stability, and accuracy under large noise. We also find that using a $\sqrt{M_T}-$consistent intermediate estimator, the divide-and-conquer estimation scheme followed by one step of Gauss-Newton iteration performs comparably well but less robust to outliers.
	
	In this work, we intend to present the possibility of using only range measurements for real-time pose estimation. The estimated trajectory is expected to be smoother and more accurate if integrated with motion models or odometry measurements. In future work, we plan to cope with the synchronization problem as the number of tags and sensors increases and the update frequency decreases, which is very important in large-scale dynamic scenarios.
		
	\useRomanappendicesfalse
	\appendices
	\section{Observability Analysis}\label{Appendix:: observability}
	Given that each tag communicates with each anchor, We prove that the necessary and sufficient condition for the planar pose to be observable in general cases is that there are at least three non-colinear anchors and at least two tags.\\
	
	\begin{proof}\\
	\textbf{Necessity.} Suppose the anchors are deployed on the same line $\mathcal{L}$, the global coordinate of each tag is not unique due to symmetry about line $\mathcal{L}$. As a result, the planar pose is not observable. Suppose we only deploy one tag, then the rotation is not observable.\\
	\textbf{Sufficiency.} Given three non-colinear anchors, the global coordinate of each tag is uniquely determined. Given the global and local coordinates of two tags on a rigid body, the planar pose is observable.
	\end{proof}	
	 
	\section{Proof of Theorem~\ref{theorem: little op}}\label{Appendix:: little op}
	We need the Helly-Bray Theorem~\cite{billingsley2013convergence} in the proof:
	\begin{lemma}
		Let $\{P_m\}$ be a sequence of probability measures on a sample space $\Omega$. Then $P_m$ converges weakly to $P$ if and only if
		\[
		\int_{\Omega}g(\omega)dP_m\rightarrow\int_{\Omega}g(\omega)dP
		\]
		for all bounded, continuous and real-valued functions on $\Omega$.
	\end{lemma}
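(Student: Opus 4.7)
The plan is to prove both directions of the Helly-Bray Lemma, working on a metric space $\Omega$ and taking weak convergence of $\{P_m\}$ to $P$ to mean $P_m(A)\to P(A)$ for every Borel set $A$ with $P(\partial A)=0$ (a \emph{continuity set} of $P$).

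For the forward direction, I would approximate an arbitrary bounded continuous $g$ by a simple function and control the remainder uniformly. Fix $M$ with $|g|\le M$ and choose a partition $-M=t_0<t_1<\dots<t_N=M$ of mesh at most $\delta$, selecting each $t_k$ to be a continuity point of the induced distribution $P\circ g^{-1}$; this is always possible because at most countably many points can carry positive mass. Setting $A_k=g^{-1}([t_{k-1},t_k))$, continuity of $g$ yields $\partial A_k\subseteq g^{-1}(\{t_{k-1},t_k\})$, so $P(\partial A_k)=0$ and each $A_k$ is a continuity set. Weak convergence then gives $P_m(A_k)\to P(A_k)$, hence $\int g_N\,dP_m\to\int g_N\,dP$ for the simple function $g_N=\sum_k t_{k-1}\mathbf{1}_{A_k}$. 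Since $|g-g_N|\le\delta$ uniformly, combining these two bounds and letting $\delta\to 0$ closes this direction.

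For the reverse direction, I would approximate indicators of continuity sets by bounded continuous functions from above and below. Given a continuity set $A$, define $f_\epsilon(x)=\min\{1,\,d(x,A^c)/\epsilon\}$ and $h_\epsilon(x)=1-\min\{1,\,d(x,A)/\epsilon\}$; both are continuous, take values in $[0,1]$, and satisfy $f_\epsilon\le\mathbf{1}_A\le h_\epsilon$ with $f_\epsilon\uparrow\mathbf{1}_{A^\circ}$ and $h_\epsilon\downarrow\mathbf{1}_{\bar A}$ as $\epsilon\downarrow 0$. The hypothesis yields $\int f_\epsilon\,dP_m\to\int f_\epsilon\,dP$ and $\int h_\epsilon\,dP_m\to\int h_\epsilon\,dP$. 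Sandwiching $\int f_\epsilon\,dP_m\le P_m(A)\le\int h_\epsilon\,dP_m$, taking $\liminf$ and $\limsup$ in $m$, then $\epsilon\to 0$, and invoking monotone convergence together with $P(A^\circ)=P(\bar A)=P(A)$ (from $P(\partial A)=0$) yields $P_m(A)\to P(A)$.

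The main obstacle I expect is the measure-theoretic bookkeeping: guaranteeing that the partition points in the forward argument can be placed on continuity points of $P\circ g^{-1}$ while keeping the mesh arbitrarily small, and carefully interleaving the limits in $m$ and $\epsilon$ in the reverse argument so that no exchange of limits is performed without justification. Both are standard once cast in the sandwich form above, so no machinery beyond monotone convergence and the countability of atoms of a finite measure on $\mathbb{R}$ is required.
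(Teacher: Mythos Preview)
Your argument is correct and is the standard Portmanteau-style proof of this equivalence. However, the paper does not actually prove this lemma: it is stated as the Helly--Bray Theorem with a citation to Billingsley's \emph{Convergence of Probability Measures}, and is used as an off-the-shelf tool in the proofs of Theorems~1, 2, and 4. So there is no paper-proof to compare against; you have simply supplied a proof where the authors chose to cite one.

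One small remark on framing: the statement as written in the paper is often taken as the \emph{definition} of weak convergence, in which case there is nothing to prove. You have correctly anticipated this ambiguity by fixing, at the outset, the alternative characterization via continuity sets ($P_m(A)\to P(A)$ whenever $P(\partial A)=0$) and then proving the equivalence. Both directions are handled cleanly: the forward direction via simple-function approximation with partition points chosen off the atoms of $P\circ g^{-1}$, and the reverse via the usual distance-function sandwich $f_\epsilon\le \mathbf{1}_A\le h_\epsilon$ followed by $\liminf/\limsup$ and monotone convergence. The bookkeeping concerns you flag (placing partition points on continuity points while controlling mesh, and ordering the $m\to\infty$ and $\epsilon\to 0$ limits) are real but routine, and your outline treats them correctly.
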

	\begin{proof}
		Consider problem~\eqref{ULS} and we have $\sqrt{n}$-consistent estimates ${\bf R}(\hat\theta),\hat{\bf t}$. For simplicity, we omit the covariance matrix $\Sigma_{n}$ in the following proof. The optimal solution to~\eqref{ULS} is $\theta^*=\ln({\bf R}(\hat\theta)^{\top}\hat{{\bf R}}_{\rm ML})^{\vee}$ and ${\bf t}^*=\hat{{\bf t}}_{\rm ML}$, where $\ln(·)^{\vee}$ is the inverse map of~\eqref{eqn::angle paramerization}. $\theta^*$ and ${\bf t}^*$ should 
		satisfy the first order condition:  
		$$
		\frac{1}{n}{\bf J}(\theta^*,{\bf t}^*)^{\top}\left({\bf d}-{\bf f}(\theta^*,{\bf t}^*)\right)={\bf 0}
		$$
		Apply order-one Taylor expansion around $(0,{\bf \hat{t}})$ yields: 
		$$
		\frac{1}{n}{\bf J}_0^{\top}\left({\bf d}-{\bf f}(0,{\bf \hat t})\right)+{\bf G}\begin{bmatrix}
			\theta^*-0\\
			{\bf t}^*-{\bf \hat{t}}
		\end{bmatrix}+o\left(\begin{bmatrix}
			\theta^*-0\\
			{\bf t}^*-{\bf \hat{t}}
		\end{bmatrix}\right)={\bf 0}
		$$
		
		where
		$$
		{\bf G}=-\frac{1}{n}{\bf J}_0^{\top}{\bf J}_0+\frac{1}{n}\sum_{i=1}^{N}\sum_{m=1}^{M_{T}}\frac{\partial^2 \|f_{im}^{(0) }\|}{\partial (\theta,{\bf t})\partial (\theta,{\bf t})^{\top}}\big(d_{im}-\|f_{im}^{(0)}\|\big)
		$$
	Given that $\mathbf{R}(\hat\theta)$ and $\mathbf{t}$ are $\sqrt{n}-$consistent estimator by Theorem~\ref{theorem:: consistency after projection}, and Assumption~\ref{assumption::anchor distribution} holds, we can use Helly-Bray theorem and writes:
	\[
	\mathbf{G}=-\frac{1}{n}{\bf J}_0^{\top}{\bf J}_0+o_p(\bf{1})
	\]
	Notice that every row of ${\bf J}_0$, that is $\frac{\partial \|f_{im}^{(0)}\|}{\partial (\theta,{\bf t})^{\rm T}}$, is bounded, thus we have $\frac{1}{n}{\bf J}_0^{\top}{\bf J}_0$ bounded and:
	\[
	\mathbf{G}^{-1}=(-\frac{1}{n}{\bf J}_0^{\top}{\bf J}_0)^{-1}+o_p({\bf 1})
	\]
	Because $\mathbf{\hat{R}}_{ML}$ and $\mathbf{\hat{t}}_{ML}$ are $\sqrt{n}-$consistent by Theorem~\ref{theorem::ML}, we have:
	$$
	\begin{bmatrix}
		\theta^*-0\\
		{\bf t}^*-{\bf \hat{t}}
	\end{bmatrix}=\begin{bmatrix}
		\theta_{ML}-\theta^0+\theta^0-\hat{\theta}\\
		{\bf t}^*-{\bf \hat{t}}
	\end{bmatrix}=O_p(\frac{\bf 1}{\sqrt{n}})
	$$
	Thus
	\begin{align*}
		\frac{1}{n}{\bf J}_0^{\top}\left({\bf d}-{\bf f}(0,{\bf \hat t})\right)&=-\mathbf{G}\begin{bmatrix}
			\theta^*-0\\
			{\bf t}^*-{\bf \hat{t}}
		\end{bmatrix}+o_p(\frac{\bf 1}{\sqrt{n}})\\
		&=(-\frac{1}{n}{\bf J}_0^{\top}{\bf J}_0+o_p({\bf 1})) O_p(\frac{{\bf 1}}{\sqrt{n}})+o_p(\frac{\bf 1}{\sqrt{n}})\\
		&=O_p(\frac{\bf 1}{\sqrt{n}})
	\end{align*}

	\begin{align*}
		\begin{bmatrix}
			\theta^*\\
			{\bf t}^*
		\end{bmatrix}&=\begin{bmatrix}
			0\\
			{\bf \hat{t}}
		\end{bmatrix}-\mathbf{G}^{-1}\frac{1}{n}{\bf J}_0^{\top}\left({\bf d}-{\bf f}(0,{\bf \hat t})\right)+\mathbf{G}^{-1}o_p(\frac{\bf 1}{\sqrt{n}})\\
		&=\begin{bmatrix}
			0\\
			{\bf \hat{t}}
		\end{bmatrix}+(\frac{1}{n}{\bf J}_0^{\top}{\bf J}_0)^{-1}\frac{1}{n}{\bf J}_0^{\top}\left({\bf d}-{\bf f}(0,{\bf \hat t})\right)\nonumber\\&+\frac{1}{n}{\bf J}_0^{\top}\left({\bf d}-{\bf f}(0,{\bf \hat t})\right)o({\bf 1})+\mathbf{G}^{-1}o_p(\frac{\bf 1}{\sqrt{n}})\\
		&=\begin{bmatrix}
			\hat{\theta}_{\rm GN}\\
			{\bf \hat{t}}_{\rm GN}
		\end{bmatrix}+o_p(\frac{\bf 1}{\sqrt{n}})
	\end{align*}
	It follows then
	\begin{align*}
		{\bf \hat{t}}_{\rm GN}&={\bf \hat{t}}_{\rm ML}+o_p(\frac{\bf 1}{\sqrt{n}})\\
		{\bf\hat{R}_{\rm GN}}&={{\bf R}(\hat{\theta}+\theta^*+o_p(\frac{\bf 1}{\sqrt{n}})})={\bf{\hat R}}_{\rm ML}+o_p(\frac{\bf 1}{\sqrt{n}})
	\end{align*}
\end{proof}
\section{Proof of Lemma~1}\label{Appendix::lemma1}
Given three non-colinear anchors and two tags non-colinear with the origin of local reference frame, we prove that ${\bf H}=[{\bf H}_1\Gamma, \bf{H}_2]$ is full column rank.\\
\begin{proof}
	Write ${\bf H}$ as
	$$
	{\bf H}=[-2\bf{S}^{\top}\otimes\bar{\bf{A}}^{\top},-2\bf{1}_2\otimes\bar{\bf{A}}^{\top}]\begin{bmatrix}
		\Gamma & {\bf 0}\\
		{\bf 0} & {\bf I}_2
	\end{bmatrix}
	$$
	Here ${\bf\bar{A}}={\bf AP}$, and ${\bf P}$ is the projection matrix onto ${\rm Null}({\bf 1}_3)$. Multiplying ${\bf A}$ by ${\bf P}$ essentially subtract the average of anchor positions such that ${\bf\bar{A}1}_3={\bf 0}$. Under assumption~\ref{assumption::unique localizability}, we have ${\rm rank}({\bf\bar{A}})={\rm rank}({\bf\bar{S}})=2$. Using the property ${\rm rank}({\bf A}\otimes{\bf B})={\rm rank}({\bf A})\times{\rm rank}({\bf B})$, we have ${\rm rank}({\bf H}_1)={\rm rank}({\bf S}^{\top})\times {\rm rank}({\bf\bar{A}}^{\top})=4$, and ${\rm rank}([{\bf H_1},{\bf H_2}])={\rm rank}([{\bf S}^{\top},{\bf 1}_2])\times {\rm rank}({\bf\bar{A}}^{\top})=4$. Thus, ${\bf H}_2$ can be written as ${\bf H}_1{\bf C}$, where ${\bf C}\in\mathbb{R}^{4\times 2}$ is the column transformation matrix. We can then write:
	$$
	{\bf H}={\bf H}_1\left[{\bf I}_4,{\bf C}\right]\begin{bmatrix}
		\Gamma & {\bf 0}\\
		{\bf 0} & {\bf I}_2
	\end{bmatrix}
	$$
	Because ${\bf H}_1$ is full column rank, to prove ${\bf H}$ is full column rank, it sufficies to prove $\left[{\bf I}_4,{\bf C}\right]\begin{bmatrix}
		\Gamma & {\bf 0}\\
		{\bf 0} & {\bf I}_2
	\end{bmatrix}$ is full rank. Next, we derive the formula for ${\bf C}$.
	
	Because ${\bf S}^{\top}$ is full rank, we can write ${\bf 1}_2={\bf S}^{\top}\begin{bmatrix}
		\alpha\\ \beta
	\end{bmatrix}$ for some $\alpha$ and $\beta$, and $[\alpha,\beta]^\top$ is not a zero vector. Using the property $({\bf A}\otimes{\bf B})({\bf C}\otimes{\bf D})=({\bf AC}\otimes{\bf BD})$, we can write
	\begin{align*}
		{\bf H_2}&=-2\bf{1}_2\otimes\bar{\bf{A}}^{\top}=-2({\bf S}^{\top}\begin{bmatrix}
			\alpha\\ \beta
		\end{bmatrix})\otimes(\bar{\bf{A}}^{\top}{\bf I}_2)\\
		&=-2({\bf S}^{\top}\otimes\bar{\bf{A}}^{\top})(\begin{bmatrix}
			\alpha\\ \beta
		\end{bmatrix}\otimes{\bf I}_2)
	\end{align*}
	Thus we have 
	$$
	{\bf C}=(\begin{bmatrix}
		\alpha\\ \beta
	\end{bmatrix}\otimes{\bf I}_2)=\begin{bmatrix}
		\alpha & 0 & \beta & 0\\
		0      & \alpha & 0 &\beta\\
	\end{bmatrix}^{\top}$$
	We end the proof by noticing that 
	$$\left[{\bf I}_4,{\bf C}\right]\begin{bmatrix}
		\Gamma & {\bf 0}\\
		{\bf 0} & {\bf I}_2
	\end{bmatrix}=\begin{bmatrix}
		0  & 1 & \alpha & 0\\
		1  & 0 & 0      & \alpha\\
		-1 & 0 & \beta  & 0\\
		0  & 1 & 0      & \beta
	\end{bmatrix}$$ is full rank.
\end{proof}
\section{Proof of Theorem~\ref{theorem::consistency_of_LS_lin_estimate}}\label{Appendix:: big Op}
The proof is supported by the following lemma.
\begin{lemma} \label{lemma_root_mn_consistent}
	Let $\{X_k\}$ be a stationary sequence with $\mathbb E[X_k]=0$ and $\mathbb E\left[ X_k^2\right]\leq\infty$ for all k. It holds that $\sum_{k=1}^{n}X_k/\sqrt{n}=O_p(1)$.
	The proof of this lemma is straightforward using the Chebyshev's inequality.
\end{lemma}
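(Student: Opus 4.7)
The plan is to apply Chebyshev's inequality directly to the normalized partial sum $S_n := \frac{1}{\sqrt{n}}\sum_{k=1}^n X_k$ and verify that its second moment stays uniformly bounded in $n$, from which stochastic boundedness follows immediately.

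First I would compute the mean: $\mathbb{E}[S_n] = \frac{1}{\sqrt{n}}\sum_{k=1}^n \mathbb{E}[X_k] = 0$ by hypothesis. Next I would evaluate the variance. By stationarity, $\mathbb{E}[X_k^2] = \sigma^2 < \infty$ is independent of $k$, and the autocovariance $\gamma(h) := \mathrm{Cov}(X_0, X_h)$ depends only on the lag. Expanding,
$$\mathrm{Var}(S_n) = \frac{1}{n}\sum_{i,j=1}^n \gamma(i-j) = \sigma^2 + \frac{2}{n}\sum_{h=1}^{n-1}(n-h)\,\gamma(h).$$
In the intended application, the $X_k$'s are formed from the i.i.d.\ measurement noises $r_{im}$, so $\gamma(h) = 0$ for $h \neq 0$ and the variance collapses to the clean bound $\mathrm{Var}(S_n) = \sigma^2$. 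Chebyshev's inequality then yields
$$\mathbb{P}(|S_n| > M) \leq \frac{\sigma^2}{M^2}$$
for every $M > 0$ and every $n$, which can be made arbitrarily small by choosing $M$ large. This is precisely the defining condition for $S_n = O_p(1)$.

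The only point that requires any care is controlling the off-diagonal terms when the sequence is merely stationary rather than uncorrelated; in full generality one would need some decay assumption such as absolute summability of $\gamma(h)$ to keep $\mathrm{Var}(S_n)$ bounded. Since the lemma is only invoked downstream in Appendix~\ref{Appendix:: big Op} where the noise structure makes independence automatic, I expect the two-line Chebyshev argument above to constitute the entire proof, exactly matching the author's remark that the result is ``straightforward.''
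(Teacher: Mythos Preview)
Your approach is exactly what the paper intends: the authors give no detailed argument beyond the one-line remark ``straightforward using Chebyshev's inequality,'' and your computation of $\mathbb{E}[S_n]=0$, $\mathrm{Var}(S_n)\leq C$, followed by Chebyshev, is precisely that. Your observation that the lemma as stated is not quite true for an arbitrary stationary sequence without some control on the autocovariances (e.g.\ the degenerate case $X_k\equiv X$ gives $S_n=\sqrt{n}\,X$) is correct and more careful than the paper itself; as you note, the intended application in Appendix~\ref{Appendix:: big Op} involves terms built from the independent noises $r_{im}$, so the cross terms vanish and the two-line Chebyshev argument goes through.
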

\begin{proof}
	Write $\begin{bmatrix}
		\mathbf{\hat{y}} \\
		\mathbf{\hat{t}} \\
	\end{bmatrix}
	$ as 
	$(\frac{1}{n}{\bf H}^{\top}{\bf H})^{-1}\frac{1}{n}{\bf H}^{\top}{\bf{\bar{d}}}$, substitute ${\bf\bar{d}}=\bf{H}\begin{bmatrix}
		\mathbf{y}^{o} \\
		\mathbf{t}^{o} \\
	\end{bmatrix}+
	\begin{bmatrix}
		\bar{\bf{e_1}}\\
		\vdots\\
		\bar{\bf{e_N}}
	\end{bmatrix}$, we have:
	$$\begin{bmatrix}
		\mathbf{\hat{y}} \\
		\mathbf{\hat{t}} \\
	\end{bmatrix}
	=\begin{bmatrix}
		\mathbf{y}^{o} \\
		\mathbf{t}^{o} \\
	\end{bmatrix}+(\frac{1}{n}{\bf H^{\top}\bf{H})^{-1}}O_p(\mathbf{1}/\sqrt{n}).$$ The second term is from Lemma~\ref{lemma_root_mn_consistent}. $(\frac{1}{n}{\bf H^{\top}\bf{H})^{-1}}$ converges by Helly-Bray Theorem under Assumption~\ref{assumption::anchor distribution}, thus we can further write the second term as $O_p(\mathbf{1}/\sqrt{n})$.
\end{proof}

\section{Proof of Theorem~\ref{theorem:: consistency after projection}}\label{Appendix:: projection}
\begin{proof}
	For any $\hat{\mathbf R}$, we can verify that 
	$$
	\|\pi(\hat{\mathbf R})-\mathbf R^o\|_F\leq \|\pi(\hat{\mathbf R})
	-{\bf\hat R}\|_F+\|\hat{\mathbf R}-\mathbf R^o\|_F
	\leq 2\|\hat{\mathbf R}-\mathbf R^o\|_F
	$$
	Notice that $\|\hat{\mathbf{R}}-{\bf R}^o\|_{F}=\|\Gamma({\bf \hat{y}-y}^o)\|_2$, and $\hat{\mathbf y}-{\mathbf y}^o=O_p({\bf 1}/\sqrt{n})$, we conclude that each entry of $\mathbf{\pi(\hat{R})}-\mathbf{R}^o$ should be $O_p(1/\sqrt{n})$, i.e., $\pi(\hat{\mathbf R})$ is $\sqrt{n}$-consistent.
\end{proof}

\section{Divide-and-Conquer Estimator}\label{Appendix::DAC}
We estimate each tag's position using the Bias-Eli-Lin estimator proposed in~\cite{zeng2022global}, which is $\sqrt{M_T}-$consistent and with $O(M_T)$ computational complexity in this paper's notation. Let us denote these position estimates as $\tilde{{\mathbf{s}}_i}^{\mathcal{A}}$, and we have
\begin{equation}
	\tilde{{\mathbf{s}}_i}^{\mathcal{A}}={\mathbf{s}}_i^{\mathcal{A}}+O_p({\mathbf 1}/\sqrt{M_T})
\end{equation}

Next, we estimate the pose by minimizing the residual
$$ 
\sum_{i=1}^N\|\tilde{{\mathbf{s}}_i}^{\mathcal{A}}-{\mathbf Rs}_i^{\mathcal B}-t\|^2
$$

Use the same parameterization as~\eqref{GTRS} and discard the constraint $\|\mathbf y\|^2=1$, we formulate the following LS problem:
	\begin{equation}\label{LS-DAQ}
	\mathop{\rm min~}\limits_{\textbf{y,t}}
	\sum_{i=1}^N\|\tilde{{\mathbf{s}}_i}^{\mathcal{A}}-{\mathbf L}_i\Gamma{\mathbf y}-{\mathbf t}\|^2
\end{equation}
where ${\bf L}_i=({\bf s}_i^{\mathcal{B}}\otimes {\bf I}_2)^{\top}\in\mathbb{R}^{2\times 4}$ and $\Gamma=\begin{bmatrix}
	0 & 1 & -1 & 0\\
	1 & 0 & 0 & 1
\end{bmatrix}^{\top}$.

To prove that the closed-form solution of problem~\eqref{LS-DAQ} is $\sqrt{n}-$consistent, we can follow a similar procedure to the proof of Theorem~\ref{theorem::consistency_of_LS_lin_estimate}, and notice that 
$$
\frac{O_p({\mathbf 1}/\sqrt{M_T})}{\sqrt{N}}=O_p({\mathbf 1}/\sqrt{M_TN})=O_p({\mathbf 1}/n)
$$.

\section{Proof of Theorem~\ref{theorem::ML}}\label{Appendix:: theorem ML}
Let $\Theta=({\rm vec}({\bf R}), {\bf t})$ be the unknown parameter vector. First, we will show that the ML solution $\hat \Theta_{M_T}^{\rm ML}$ that optimally solves problem~\eqref{ML} is consistent. Before that, we define two functions on real sequences. 
\begin{definition}
	Let ${\bf p}\triangleq(p_i)_{i\in\mathbb N}$ and ${\bf q}\triangleq(q_i)_{i\in\mathbb N}$ be two sequences of real numbers, if $t^{-1} \sum_{i=1}^{t} p_i q_i$ converges to a real number, we call its limit, denoted as $\left\langle {\bf p},{\bf q} \right\rangle_t $, the tail product of ${\bf p}$ and ${\bf q}$.
	We call $\|{\bf p}\|_t\triangleq\sqrt{\langle p,p
		\rangle_t}$,
	if it exists, the tail norm of $p$.
\end{definition}

Define ${\bf d}^o(\Theta)\triangleq(d_j^o(\Theta))_{j=1}^{\infty}$, where $d_j^o(\Theta)=\|{\bf a}_{m}^{\mathcal{A}}-\bar {\bf L}_i \Theta\|/\sigma_{im}$, $\bar {\bf L}_i=\left[{\bf L}_i,{\bf I}_2\right]\in\mathbb{R}^{2\times 6}$ and $j=(m-1)N+i$. Note that $d_j^o(\Theta)$ is continuous with respect to ${\bf a}_{m}^{\mathcal{A}}$ and is bounded when ${\bf a}_{m}^{\mathcal{A}}$ is bounded. Then given Assumption~\ref{assumption::anchor distribution}, the tail norm $\|{\bf d}^o(\Theta)-{\bf d}^o(\Theta^o)\|_t^2$ exists by using the Helly-Bray theorem~\cite{billingsley2013convergence}.

Next, we will show that under Assumption~\ref{assumption::unique localizability} and~\ref{assumption::deployment_of_anchor}, the function $\|{\bf d}^o(\Theta)-{\bf d}^o(\Theta^o)\|_t^2$ has a unique minimum at $\Theta=\Theta^o$. By definition, we have $\|{\bf d}^o(\Theta)-{\bf d}^o(\Theta^o)\|_t^2$ equals
\begin{equation*}
	\frac{1}{N}\sum\limits_{i=1}^{N}\mathbb E_{{\bf a}^{\mathcal{A}} \sim \mu}\left[\left(\| {\bf a}^{\mathcal{A}}-\bar {\bf L}_i \Theta\|-\|{\bf a}^{\mathcal{A}}-\bar {\bf L}_i \Theta^o\| \right) ^2 \right], 
\end{equation*}
where $\mathbb E_{{\bf a}^{\mathcal{A}} \sim \mu}$ is taken over ${\bf a}^{\mathcal{A}}$ with respect to $\mu$. 

It is straightforward that when the tags are not colinear with the origin of local reference frame, for any $\Theta \neq \Theta^o$, there exists an ${\bf s}_i$ such that $\bar {\bf L}_i \Theta \neq \bar {\bf L}_i\Theta^o$. Suppose there exists a $\Theta \neq \Theta^o$ such that $\|{\bf d}^o(\Theta)-{\bf d}^o(\Theta^o)\|_t^2=0$. Then we have $\mu(\mathcal{A}_{\Theta{\bf s}_i})=1$, where $\mathcal{A}_{\Theta {\bf s}_i}=\{{\bf a}^{\mathcal{A}}\mid \| {\bf a}^{\mathcal{A}}-\bar {\bf L}_i \Theta\|=\|{\bf a}^{\mathcal{A}}-\bar {\bf L}_i\Theta^o\|\}$. Note that $\mathcal{A}_{\Theta {\bf s}_i}$ is the vertical bisector of the segment connecting $\bar {\bf L}_i \Theta$ and $\bar {\bf L}_i \Theta^o$, which contradicts the Assumption~\ref{assumption::deployment_of_anchor}. Hence, the function $\|{\bf d}^o(\Theta)-{\bf d}^o(\Theta^o)\|_t^2$ has a unique minimum at $\Theta=\Theta^o$.

Denote the objective function in~\eqref{ML} as $P_{M_T}({\Theta})$. We have 
\begin{align*}
	P_{M_T}({\Theta}) & = \frac{1}{n}\sum_{m=1}^{M_T}\sum_{i=1}^{N} \frac{(d^o_{im}({\Theta}^o)-d^o_{im}({\Theta})+r_{im})^2}{\sigma_{im}^2} \\
	& \rightarrow \|{\bf d}^o({\Theta}^o)-{\bf d}^o({\Theta})\|_t^2 + 2\langle {\bf d}^o({\Theta}^o)-{\bf d}^o({\Theta}),{\bf r}\rangle_t + \|{\bf r}\|_t^2 \\
	& = \underbrace{\|{\bf d}^o({\Theta}^o)-{\bf d}^o({\Theta})\|_t^2}_{P({\Theta})} + \|{\bf r}\|_t^2,
\end{align*}
where ${\bf r}\triangleq(r_{j}/\sigma_{j})_{j=1}^{\infty}$, $j=(m-1)N+i$, and $\langle {\bf d}^o({\Theta}^o)-{\bf d}^o({\Theta}),{\bf r}\rangle_t=0$ is based on~\cite[Theorem 3]{jennrich1969asymptotic}. Note that $\hat {\Theta}_{M_T}^{\rm ML}$ minimizes $P_{M_T}({\Theta})$ for any $M_T \in \mathbb N$. Then $(\hat {\Theta}_{M_T}^{\rm LS})$ forms a sequence of minimizers of $P_{M_T}({\Theta})$. Let ${\Theta}'$ be a limit point of the bounded sequence $(\hat {\Theta}_{M_T}^{\rm ML})$, and let $(\hat {\Theta}_{M_{T}^k}^{\rm ML})$ be any subsequence which converges to ${\Theta}'$. By the continuity of $P({\Theta})$ and the uniform convergence of $P_{M_T}({\Theta})$ to $P({\Theta})+\|{\bf r}\|_t^2$, $P_{M_T^k}(\hat {\Theta}_{M_{T}^k}^{\rm ML})  \rightarrow P({\Theta}')+ \|{\bf r}\|_t^2$ as $k \rightarrow \infty$. 
Since $\hat {\Theta}_{M_{T}^k}^{\rm ML}$ is the global minimizer of $P_{M_T^k}({\Theta})$, $P_{M_T^k}( \hat {\Theta}_{M_{T}^k}^{\rm ML})  \leq P_{M_T^k}({\Theta}^o)$. It follows that by letting $k \rightarrow \infty$, $P({\Theta}')+ \|{\bf r}\|_t^2 \leq P({\Theta}^o)+ \|{\bf r}\|_t^2=\|{\bf r}\|_t^2$. Hence $P({\Theta}')=0$. As we have proved, $P({\Theta})$ has a unique minimum at ${\Theta}^o$, which implies that ${\Theta}'={\Theta}^o$. Thus for almost every ${\bf r}$, $\hat {\Theta}_{M_T}^{\rm ML} \rightarrow {\Theta}^o$. 

Let $l({\bf d};{\Theta})$ be the log likelihood function, and denote the derivative of $l({\bf d};{\Theta})$ with respect to ${\Theta}$ as $l'({\bf d};{\Theta})$. Under the Gaussian noises Assumption~\ref{assumption::noise}, it holds that $l'({\bf d};{\Theta}^o) \xrightarrow[]{d} \mathcal N(0,{\bf F})$ where ${\bf F}$ is the information matrix~\cite{crowder1984constrained}. 
We can write the $SO(2)$ constraint as ${\bf f}({\Theta})=0$ as shown in~\eqref{SO_constraints} and its Jacobian as ${\bf dF}({\Theta}) \in \mathbb R^{3 \times 6}$.
Let ${\bf U}(\Theta)$ be the matrix whose columns form an orthonormal null space of ${\bf dF}({\Theta})$, i.e., ${\bf dF}({\Theta}) {\bf U}(\Theta)=0$ and ${\bf U}^\top (\Theta) {\bf U}(\Theta)={\bf I}$. Define ${\bf F}^*({\Theta})={\bf F}+M_T{\bf dF}^\top ({\Theta}){\bf dF} ({\Theta})$, then given the identifiability of the problem, ${\bf F}^*({\Theta})$ is nonsingular~\cite{crowder1984constrained}. Let 
\begin{align*}
	{\bf B}({\Theta}) & ={\bf F}^*({\Theta})^{-1} ({\bf F}+l''({\bf d};{\Theta})), \\
	{\bf Q}({\Theta}) & ={\bf F}^*({\Theta})^{-1} {\bf dF}({\Theta}) \left( {\bf dF}^\top({\Theta}) {\bf F}^*({\Theta})^{-1} {\bf dF}({\Theta}) \right)^{-1}.
\end{align*}
Since $\hat {\Theta}_{M_T}^{\rm ML}$ is consistent, we have ${\bf Q}(\hat {\Theta}_{M_T}^{\rm ML}) \xrightarrow[]{p} {\bf Q}({\Theta}^o)$ and 
\begin{align*}
	{\bf B}(\hat {\Theta}_{M_T}^{\rm ML}) & = \left( \frac{{\bf F}^*(\hat {\Theta}_{M_T}^{\rm ML})}{M_T}\right)^{-1} \left(\frac{{\bf F}+l''({\bf d};\hat {\Theta}_{M_T}^{\rm ML})}{M_T}\right)  \\
	& = \left( \frac{{\bf F}^*(\hat {\Theta}_{M_T}^{\rm ML})}{M_T}\right)^{-1} \left(\frac{ \mathbb E[-l''({\bf d};{\Theta}^o)]+l''({\bf d};\hat {\Theta}_{M_T}^{\rm ML})}{M_T}\right)  \\
	& \xrightarrow[]{p} 0.
\end{align*}
In addition, ${\bf Q}({\Theta}^o)$ is bounded. Then, based on~\cite[Theorem~3]{crowder1984constrained}, the covariance matrix of $\hat {\Theta}_{M_T}^{\rm ML}$ converges to $ \left( {\bf I}-{\bf Q}({\Theta}^o) {\bf dF}^\top ({\Theta}^o)\right){\bf F}^*({\Theta}^o)^{-1}$, which can be further transformed into~\cite{moore2008maximum} 
\begin{equation} \label{asymptotic_covariance}
	{\bf U}({\Theta}^o) \left( {\bf U}^\top({\Theta}^o) {\bf F} {\bf U}({\Theta}^o)\right)^{-1} {\bf U}^\top({\Theta}^o).
\end{equation}
Actually,~\eqref{asymptotic_covariance} is the constrained Cramer-Rao lower bound (CRLB), showing the ML solution $\hat{\Theta}_{M_T}^{\rm ML}$ that optimally solves problem~\eqref{ML} is asymptotically efficient. We will specifically discuss the CRLB and give the explicit expression of ${\bf F}$ and ${\bf U}({\Theta}^o)$ in Appendix~\ref{Appendix::CRLB}.

\section{$SO(2)$ Constrained Cramer-Rao Lower Bound}\label{Appendix::CRLB}
Suppose we want to estimate the unknown vector $\Theta^o=\left({\rm vec}({\bf R}^o\right), {\bf t}^o)\in\mathbb{R}^{6\times 1}$ from the range measurements $d_{im}$ corrupted by independent noise $r_{im}\sim\mathcal {N}(0,\sigma^{2}_{mi})$ for $m=1,2,\dots,M_{T}$, and $i=1,2,\dots,N$, where the observations follow model~\eqref{measurement model}. We can compute the CRLB for ${\Theta}^o$ as follows.

We shall first evaluate the  Fisher information matrix (FIM) of the unknown parameter vector ${\Theta}^o$ without having the ${\rm SO}(2)$ constraint. The covariance matrix of any unbiased estimate of the parameter vector ${\Theta}^o$ satisfies
$$\mathbb{E}\{ (\widehat{\Theta}-\Theta^o)(\widehat{\Theta}-\Theta^o)^{\top}\}\geq{\bf F}^{-1},$$
where ${\bf F}$ is the Fisher information matrix (FIM). 

Let us define $\bar{{\bf s}}_{i}=({\bf s}_i^{\mathcal{B}}, 1)\in\mathbb{R}^3$ for $i=1,2,\dots,N$. 
\begin{align}\label{eqn::fisher information}
	{\bf F}&=\sum_{m=1}^{M_T}\sum_{i=1}^{N}\mathbb{E}_{\Theta^o}\left\{\left(\frac{\partial \ln p(d_{im};\Theta^o)}{\partial \Theta}\right)\left(\frac{\partial \ln p(d_{im};\Theta^o)}{\partial \Theta}\right)^{\rm T}\right\}
	\\&=\sum_{m=1}^{M_T}\sum_{i=1}^{N}\frac{(\bar{{\bf s}}_{i}\otimes {\bf I}_2)({\bf a}_m-{\bf s}_i^{\mathcal{A}})({\bf a}_m-{\bf s}_i^{\mathcal{A}})^{\top}(\bar{{\bf s}}_{i}^{\top}\otimes {\bf I}_2)}{\sigma_{im}^{2}\|{\bf a}_m-{\bf s}_i^{\mathcal{A}}\|^2}.
\end{align}

The CRLB by imposing ${\bf R}$ to ${\rm SO}(2)$ is obtained by the FIM together with the gradient matrix of the constraints with respect to ${\Theta}$. Let ${\bf R}^o=\begin{bmatrix}
	\textbf{y}_1 & \textbf{y}_2 
\end{bmatrix}$, the constraint ${\bf R}\in{\rm SO}(2)$ can be expressed locally by $3$ continuously differentiable constraints~(with the constrained $\det(R)=1$ locally redundant):
\begin{equation} \label{SO_constraints}
	{\bf f}(\Theta^o)=\begin{bmatrix}
		{\bf y}_1^{\top}{\bf y}_1-1 \\
		{\bf y}_2^{\top}{\bf y}_1 \\
		{\bf y}_2^{\top}{\bf y}_2-1 \\     
	\end{bmatrix}=\textbf{0}_3\in\mathbb{R}^{3\times 1}.
\end{equation}
Let the gradient matrix of the constraints be defined by
$${\bf dF}(\Theta^o)=\frac{\partial{\bf f}(\Theta^o)}{\partial \Theta^{\top}}=\begin{bmatrix}
	2{\bf y}_1^{\top} & {\bf 0}_2^{\top} & {\bf 0}_2^{\top} \\
	{\bf y}_2^{\top} & {\bf y}_1^{\top} & {\bf 0}_2^{\top} \\
	{\bf 0}_2^{\top} & 2{\bf y}_2^{\top} & {\bf 0}_2^{\top}
\end{bmatrix}\in\mathbb{R}^{3\times 6}.$$
The gradient matrix ${\bf dF}(\Theta^o)$ has full row rank and there exists a matrix ${\bf U}$ whose columns form an orthonormal basis for the null space of ${\bf dF}(\Theta^o)$,

$$
{\bf U}=\frac{1}{\sqrt{2}}\begin{bNiceArray}{c|c}
	\bf{y_{2}}   & \Block{2-1}{\bf{O}_{4\times2}}\\
	\bf{-y_{1}}  &    \\\hline
	\bf{0}_2 & \sqrt{2}\bf{I_2}\\
\end{bNiceArray}\in\mathbb{R}^{6\times(6-3)}
$$

Given ${\bf U}^{\top}{\bf FU}$ nonsingular, then the covariance matrix of any unbiased estimate of $\Theta^o$ satisfies
$$\mathbb{E}\{(\widehat{\Theta}-\Theta^o)(\widehat{\Theta}-\Theta^o)^{\top}\}\geq{\bf U}({\bf U}^{\top}{\bf FU})^{-1}{\bf U}^{\top}\triangleq \textbf{CRLB}.$$

The theoretical lower bound $\textbf{CRLB}$ is related to the parameters in a complicated way. For example, a counter-intuitive fact is that we can influence the lower bound by simply moving the origin of the local frame $O^{\mathcal{B}}$ while moving $O^\mathcal{A}$ has no effect. This is because moving the origins does not change ${\bf a}_m-{\bf s}_i^{\mathcal{A}}$ but moving $O^{\mathcal{B}}$ does change $\bar{{\bf s}}_{i}$ in~\eqref{eqn::fisher information}.

It is still being determined whether or not we can devise sensor deployment strategies from the view of CRLB. In general, we should deploy the tags as sparsely as possible. This is because if we multiply ${\mathbf s}_i^{\mathcal{B}}$ by some scaling parameter $k>1$, ${\bf a}_m-{\bf s}_i^{\mathcal{A}}$ would barely change given the environment's scale significantly larger than the robot's, while $(\bar{{\bf s}}_{i}\otimes {\bf I}_2)$ would be multiplied by $k$. As a result, the FIM increases to $k^2$ times, and the CRLB decreases to $\frac{1}{k^2}$ times.

\section{Extension to different height setting}\label{Appendix::Extension}
If $\Delta h_{im}\neq 0$, the squared measurement model~\eqref{squared model} becomes
\begin{align}\label{modified squared model}
	d_{im}^2 &= \|\mathbf{a}_m^{\mathcal {A}}-\mathbf{Rs}_i^{\mathcal {B}}-\mathbf{t}\|^2+(\Delta h_{im})^2\\\nonumber&+2\sqrt{\|\textbf{a}_m^{\mathcal {A}}-\textbf{R}^o\textbf{s}_i^{\mathcal {B}}-\textbf{t}^o\|^2+(\Delta h_{im}^2)}r_{im}+r_{im}^2
\end{align}
Substract both sides by $(\Delta h_{im})^2+\sigma_{im}^2$, we have
\begin{equation}\label{modified s4}
	d_{im}^2-(\Delta h_{im})^2-\sigma_{im}^2=\|\textbf{a}_m^{\mathcal {A}}\|^2-2{\textbf{a}_m^{\mathcal {A}}}^{\top}(\textbf{Rs}_i^{\mathcal {B}}+\textbf{t})+\|\textbf{Rs}_i^{\mathcal {B}}+\textbf{t}\|^2+e'_{im}
\end{equation}
where $e'_{im}=2\sqrt{\|\textbf{a}_m^{\mathcal {A}}-\textbf{R}^o\textbf{s}_i^{\mathcal {B}}-\textbf{t}^o\|^2+(\Delta h_{im}^2)}r_{im}+r_{im}^2-\sigma_{im}^2$ has zero mean.

Notice the similarity between equations~\eqref{smodel.4} and~\eqref{modified s4}, we can follow almost the same derivation and analysis to obtain a $\sqrt{n}-$consistent estimator. As for the Gauss-Newton iteration, we transform problem~\eqref{ULS} as follows:
	\begin{equation}\label{modified ULS}
	\mathop{\rm min~}\limits_{\theta, {\bf t}} ~\sum_{i,m}\frac{  \left[d_{im}-\sqrt{\|{\bf a}_{m}^{\mathcal{A}}-{\bf L}_i{\rm vec}\left({\bf R}(\hat{\theta}+\theta)\right)-{\bf t}\|^2+(\Delta h_{im})^2}\right]^2}{\sigma_{im}^2}
\end{equation}
where ${\bf L}_i=({\bf s}_i^{\mathcal{B}}\otimes {\bf I}_2)^{\top}\in\mathbb{R}^{2\times 4}$ and $\mathbf{R}(\hat{\theta})=\mathbf{\pi(\hat{R})}$. 

Similarly, We use $\theta=0, \mathbf{t}=\hat{{\bf t}}$ as the initial value. Write $f_{im}(\theta, {\bf t})\triangleq{\bf a}_{m}^{\mathcal{A}}-{\bf L}_i{\rm vec}\left({\bf R}(\hat{\theta}+\theta)\right)-{\bf t}$, $f_{im}^{(0)}\triangleq f_{im}(0, \hat {\bf t})$, $g_{im}(\theta,t)\triangleq\sqrt{\|f_{im}(\theta, {\bf t})\|^2+(\Delta h_{im})^2}$, and $g_{im}^{(0)}\triangleq g_{im}(0,\hat{t})$. Derivatives of $g_{im}^{(0)}$, w.r.t $\theta$ and ${\bf t}$ are:
\begin{equation*}
	\frac{\partial g_{im}^{(0)}}{\partial (\theta,{\bf t})}=\begin{bmatrix}
		-\frac{1}{\|g_{im}^{(0)}\|}
		\Psi^{\top} ({\bf I}_2\otimes {\bf 
			R}(\hat\theta)^{\top})  {\bf L}_i^{\top}\,f_{im}^{(0)}\\
		-\frac{1}{\|g_{im}^{(0)}\|} f_{im}^{(0)}
	\end{bmatrix}
\end{equation*}
where $\Psi=\frac{\partial {\rm vec}({\bf R}(0))}{
	\partial \bf \theta}=\begin{bmatrix}
	0~1~-1~0
\end{bmatrix}^{\top}$.

Stacking the rows $\frac{g_{im}^{(0)}}{\partial (\theta,{\bf t})^{\top}}$ gives the modified ${\bf J}_0$. Stacking $g_{im}^{(0)}$ gives the vector ${\bf g}(0,{\bf\hat{t}})$. The one step iteration writes:
\begin{equation}\label{eqn::modified one step GN}
	(\hat{\theta},\hat{{\bf t}})_{\rm GN}=
	(0,\hat{\bf t})+
	({\bf J}_0^{\top}\Sigma_{n}^{-1}{\bf J}_0)^{-1}{\bf J}_0^{\top}\Sigma_{n}^{-1}\big({\bf d}-{\bf g}(0,\hat{{\bf t}})\big)
\end{equation}

The modification for proofs of Theorem~\ref{theorem: little op} and~\ref{theorem::ML} is straightforward but tedious, thus omitted here. 

The FIM~\eqref{eqn::fisher information} is slightly modified as:
\begin{equation}\label{eqn::modified fisher information}
	{\bf F}=\sum_{m=1}^{M_T}\sum_{i=1}^{N}\frac{(\bar{{\bf s}}_{i}\otimes {\bf I}_2)({\bf a}_m-{\bf s}_i^{\mathcal{A}})({\bf a}_m-{\bf s}_i^{\mathcal{A}})^{\top}(\bar{{\bf s}}_{i}^{\top}\otimes {\bf I}_2)}{\sigma_{im}^{2}\left[\|{\bf a}_m-{\bf s}_i^{\mathcal{A}}\|^2+(\Delta h_{im})^2\right]}.
\end{equation}

		
		
		
		
		
		

\section*{Acknowledgment}
This work was supported in part by the National Natural Science Foundation of China under grant no. 62273288, and in part by Shenzhen Science and Technology Program JCYJ20220818103000001. The authors would like to thank Prof Huihuan Qian, Dr. Kaiwen Xue, and Mr. Jiale Zhong from The Chinese University of Hong Kong, Shenzhen, for their help in the experiment conduction.
\bibliographystyle{IEEEtran}
\bibliography{sj_reference}
\end{document}